\documentclass{article} 
\usepackage{iclr2026_conference,times}

\usepackage{amsmath,amsfonts,bm}

\def\eqref#1{equation~\ref{#1}}

\def\1{\bm{1}}

\DeclareMathAlphabet{\mathsfit}{\encodingdefault}{\sfdefault}{m}{sl}
\SetMathAlphabet{\mathsfit}{bold}{\encodingdefault}{\sfdefault}{bx}{n}

\usepackage{url}
\usepackage{microtype}
\usepackage{graphicx}
\usepackage{booktabs}
\usepackage{multirow} 
\usepackage{subcaption}
\usepackage{amsmath}
\usepackage{amssymb}
\usepackage{mathtools}
\usepackage{amsthm}
\usepackage{letltxmacro}
\usepackage{algorithm}
\usepackage{algorithmic}
\usepackage[english]{babel}
\usepackage{amsthm}
\usepackage{mathrsfs}
\usepackage{lineno}
\usepackage{svg}
\usepackage{color}
\usepackage[font=small,labelfont=bf]{caption,subcaption}
\usepackage{hyperref}
\usepackage{url}

\def\B#1{\mathbf#1}
\def\C#1{\mathbb#1}
\newtheorem{theorem}{Theorem}
\def\MyMethod{IGU-LoRA}

\title{IGU-LoRA: Adaptive Rank Allocation via Integrated Gradients and Uncertainty-Aware Scoring}

\iclrfinalcopy

\author{Xuan Cui$^{1}$, Huiyue Li$^{1}$, Run Zeng$^{1}$, Yunfei Zhao$^{1}$, Jinrui Qian$^{1}$, Wei Duan$^{1,*}$, Bo Liu$^{1,*}$, Zhanpeng Zhou$^{2,*}$ \\ 
$^1$Chongqing Technology and Business University \quad $^2$Shanghai Jiao Tong University \\
$^*$ Corresponding Author \\
}

\begin{document}

\maketitle

\begin{abstract}
   As large language models (LLMs) scale to billions of parameters, full-parameter fine-tuning becomes compute- and memory-prohibitive. Parameter-efficient fine-tuning (PEFT) mitigates this issue by updating only a small set of task-specific parameters while keeping the base model frozen. Among PEFT approaches, low-rank adaptation (LoRA) is widely adopted; however, it enforces a uniform rank across layers despite substantial variation in layer importance, motivating {layerwise} rank allocation. Recent adaptive-rank variants (e.g., AdaLoRA) allocate ranks based on importance scores, yet typically rely on instantaneous gradients that capture only local sensitivity, overlooking non-local, pathwise effects within the same layer, which yields unstable and biased scores. To address this limitation, we introduce \textbf{IGU-LoRA}, an adaptive-rank LoRA that (i) computes {within-layer} Integrated Gradients (IG) sensitivities and aggregates them into a layer-level score for rank allocation, and (ii) applies an uncertainty-aware scheme using exponential moving averages with deviation tracking to suppress noisy updates and calibrate rank selection. Theoretically, we prove an upper bound on the composite trapezoidal rule approximation error for parameter-space IG under a pathwise Hessian-Lipschitz condition, which informs the quadrature budget. Across diverse tasks and architectures, IGU-LoRA consistently outperforms strong PEFT baselines at matched parameter budgets, improving downstream accuracy and robustness. Ablations confirm the contributions of pathwise within-layer sensitivity estimates and uncertainty-aware selection to effective rank allocation. Our code is publicly available at \url{https://github.com/withyou12/igulora.git}
\end{abstract}
\vspace{-2mm}
\section{Introduction}\label{sec:introduction}
\vspace{-1em}
\setlength{\jot}{2pt}
\setlength{\floatsep}{1ex}
\setlength{\textfloatsep}{1ex}
\setlength{\intextsep}{1ex}
\setlength{\parskip}{1ex}

Large language models (LLMs) have achieved remarkable success across a wide range of NLP tasks~\citep{bert2019ACL,brown2020language,han2025ateb}. However, specialising these models for new downstream tasks remains challenging due to their large parameter counts and substantial computational and memory costs. Consequently, fine-tuning has emerged as the standard way to adapt pre-trained LLMs to particular downstream tasks.\par

Early efforts in fine-tuning primarily relied on full-parameter fine-tuning (FPFT)~\citep{lv2024ACL, qiu2020pre, raffel2020exploring}, where all model parameters are updated during training. While effective for small to medium-scale models, such as BERT~\citep{bert2019ACL} and RoBERTa-large~\citep{liu2019roberta}, FPFT becomes increasingly impractical as model size scales exponentially. For example, GPT-3~\citep{gpt3} contains 175 billion parameters, making full fine-tuning prohibitively expensive in terms of computation and memory.\par
To alleviate these challenges, parameter-efficient fine-tuning (PEFT) methods have been proposed, which adapt pre-trained models by updating only a small subset of parameters while keeping most of the model frozen. Notable PEFT methods include Adapter Tuning~\citep{houlsby2019parameter,adapterdrop2021,adapterfusion2021,he2022towards,wang2022adamix}, Prefix Tuning~\citep{Li2021IJCNLP,10651243}, and Prompt Tuning~\citep{liu2022ACL,Zhang2024KDD,Yu2023IJCAI,Cui2026TIP}. These methods significantly reduce the number of trainable parameters. However, they primarily affect shallow or intermediate layers, limiting their ability to capture deeper semantic representations.\par
Complementary to the above, weight-delta methods (e.g., Diff Pruning~\citep{guo2020parameter,fang2023depgraph}) selectively update a sparse subset of important weights. While effective in reducing the scale of trainable parameters, these methods often rely on unstructured sparsity, which poses challenges for optimisation and is less compatible with modern hardware acceleration. A more structured alternative is Low-Rank Adaptation (LoRA)~\citep{lora2022ICLR}, which models the weight update $\B{\Delta} \B{W}$ as the product of two low-rank matrices. By preserving the pretrained model architecture and introducing only a small number of trainable parameters, LoRA achieves high efficiency without sacrificing model capacity. However, LoRA typically uses a fixed rank across all layers, ignoring the heterogeneous contribution of different weight matrices. This static configuration may limit the adaptability and expressiveness of the model.\par
Building on this observation, several adaptive-rank PEFT methods have been proposed~\citep{adalora2023ICLR,Xu2023AutoLoRaAP,ding2023,valipour2023}. For example, AdaLoRA~\citep{adalora2023ICLR} applies singular value decomposition (SVD) to the low-rank update matrices and dynamically adjusts rank sizes based on layer-wise importance scores. However, the scoring mechanism in AdaLoRA is primarily based on instantaneous gradient signals, which fail to capture long-term parameter contributions and inter-layer interactions. As a result, the rank allocation may be suboptimal in complex optimisation scenarios.\par
To overcome these limitations, we propose \MyMethod (Fig.~\ref{fig:ig}(c)), an IG-driven PEFT framework that extends Integrated Gradients to the parameter space for scoring parameter importance. The IG path integral is efficiently approximated via a mini-batch stochastic quadrature that uniformly samples one node $\alpha\in[0,1]$ per mini-batch, thereby avoiding the $O(N)$ forward-backward passes of trapezoidal integration—where $N$ denotes the number of discretization steps along the IG path—and adding only batch-linear overhead. Compared with instantaneous-gradient heuristics, this yields stable and globally informed importance estimates. Robustness is further enhanced by modeling the predictive effect of parameter perturbations and by an uncertainty-aware score that couples an EMA mean with a dispersion term. On the theory side, we establish \textcolor{blue}{(i)} a discretization-sampling error bound for the IG estimator of order $O(N^{-2})+O(M^{-1/2})$, where $M$ is the number of sampled mini-batches, and \textcolor{blue}{(ii)} a high-probability stability guarantee for the EMA ratio score $\mathsf{SNR}t$, the signal-to-noise ratio at iteration $t$. Empirically, across datasets (BoolQ, GSM8K, GLUE, …) and backbones (RoBERTa-large, Qwen-2.5-0.5B, Llama-2-7B, Llama-3-8B, DeepSeek-R1-Distill-Qwen-2.5-7B), \MyMethod\ consistently improves accuracy over strong PEFT baselines (LoRA, AdaLoRA, DoRA) while matching their memory footprint and decoding latency.
\begin{figure*}[h]
    \centering
    \includegraphics[width=\linewidth]{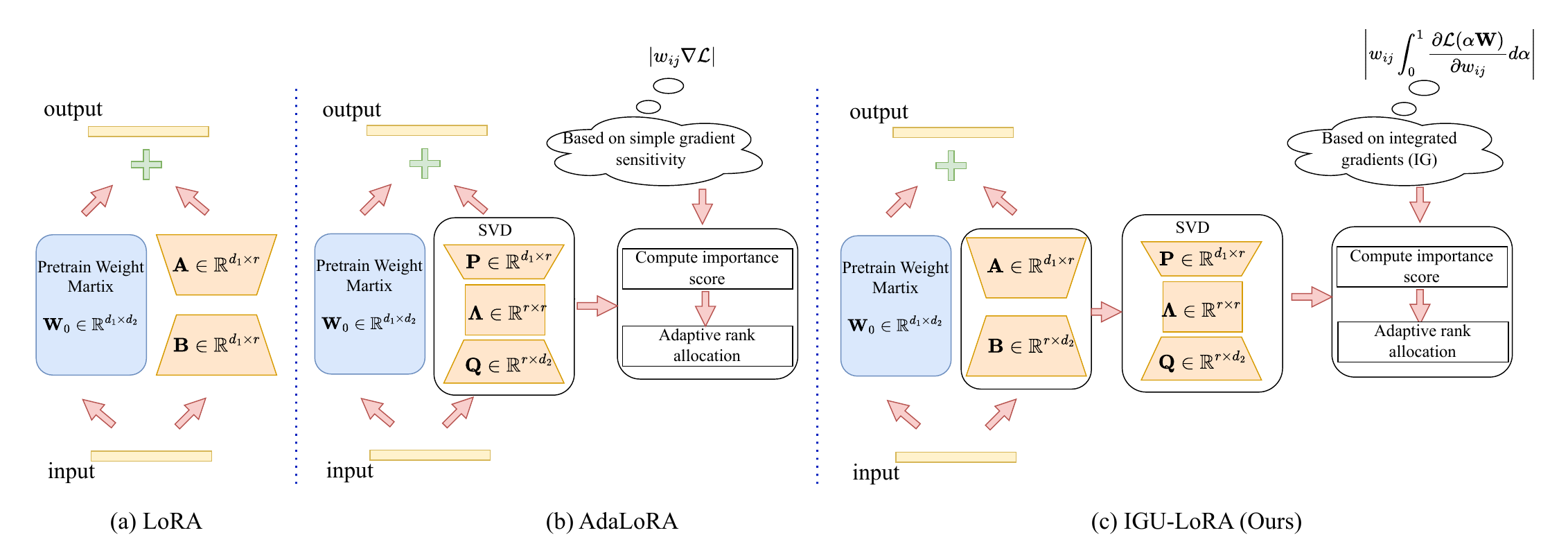}
    \vspace{-20pt}
    \caption{Comparison of frameworks: left to right—(a) LoRA, (b) AdaLoRA, (c) the proposed \MyMethod. \MyMethod\ builds on LoRA and AdaLoRA, introducing integrated gradients (IG) to compute parameter importance scores. Please zoom in 300\% for better clarity.}
    \label{fig:ig}
\end{figure*}

\vspace{-5mm}
\section{Related Works}\label{sec:relatedworks}
\vspace{-3mm}
\subsection{Parameter Efficient Fine-tuning}
\vspace{-2mm}
Parameter-Efficient Fine-Tuning (PEFT) received widespread attention for its effectiveness in efficiently adapting LLMs. Representative approaches included Adapter Tuning~\citep{houlsby2019parameter,adapterdrop2021,adapterfusion2021,he2022towards,wang2022adamix}, Prefix Tuning~\citep{Li2021IJCNLP,10651243}, Prompt Tuning~\citep{liu2022ACL,Zhang2024KDD,Yu2023IJCAI,Cui2026TIP}, and P-Tuning v2~\citep{Ptuningv2}, which inserted lightweight trainable modules into different layers of the model to enable efficient task adaptation. In parallel, reparameterization-based PEFT approaches~\citep{Li2018MeasuringTI,aghajanyan2021,ijcai2024p0243,lora2022ICLR,adalora2023ICLR} received increasing attention. Without modifying the model architecture, these methods modeled and optimized parameter updates in a low-dimensional and efficient manner. Among them, Low-Rank Adaptation (LoRA)~\citep{lora2022ICLR} has become a prominent method by expressing weight updates as the product of two low-rank matrices, which allows for tight control over the trainable parameter count while maintaining model performance. With the rapid release of open-source LLMs~\citep{Shao2024DeepSeekV2AS,liu2019roberta,Dubey2024TheL3} and their increasing use in instruction tuning and other real-world applications, PEFT has emerged as the mainstream paradigm for scalable fine-tuning and has been widely adopted in practical systems.
\vspace{-3mm}
\subsection{Low-rank Adaptation Fine-tuning}
\vspace{-1mm}
LoRA~\citep{lora2022ICLR} is a representative PEFT method that freezes pretrained weights and injects low-rank matrices, reducing parameter overhead with minimal performance loss. Several LoRA-based methods have been proposed to enhance efficiency and scalability. For example, Delta-LoRA~\citep{Zi2023DeltaLoRAFH} improves LoRA's expressiveness by updating weights with the temporal difference of \(\mathbf{A}\mathbf{B}\), addressing the limitations of small low-rank matrices. DoRA~\citep{liu2024dora} decouples optimization by factorizing \(\mathbf{W}\) into a magnitude vector \(m\) and a direction matrix \(\mathbf{V}\). MeLoRA~\citep{ACL2024melora} aggregates outputs from parallel low-rank adapters in a block-diagonal structure to improve model capacity. AutoLoRA~\citep{Xu2023AutoLoRaAP} uses meta-learning to automatically assign optimal per-layer ranks, while AdaLoRA~\citep{adalora2023ICLR} dynamically adjusts ranks during training using SVD and parameter importance scores. {SalientLoRA~\citep{NEURIPS2024_ed9f00cb} allocates ranks based on parameter saliency, optimizing the low-rank layers for improved performance. GoRA~\citep{gora} adapts low-rank adjustments dynamically using gradient-driven methods to meet task requirements while maintaining efficiency.} These techniques enable efficient fine-tuning with fewer trainable parameters and strong performance.

\vspace{-3mm}
\subsection{Integrated Gradients}
\vspace{-1mm}
In interpretability research for deep learning, Integrated Gradients (IG~\citep{IG2017ICML}) is a widely adopted attribution method that mitigates gradient saturation by computing the integral of gradients along the path from a baseline input to the actual input. IG satisfies two key axioms, completeness and sensitivity, which ensure that it quantitatively reflects the contribution of each input feature to the model's output. Subsequent studies extend IG in various directions. Theoretically,~\cite{Lundberg2017nips} show that IG is equivalent to Shapley values under certain conditions. From a computational perspective,~\cite{Kapishnikov2021CVPR} propose an adaptive sampling strategy that improves runtime efficiency by $3 \times$. IG also demonstrates practical utility in high-stakes domains such as medical imaging~\citep{SAYRES2019552}, where it improves the localization of diabetic retinopathy markers. In this work, we extend IG to parameter importance estimation in large model fine-tuning. Our method addresses the limitations of instantaneous gradient signals, which are prone to vanishing in deep networks. It introduces a redefined sensitivity scoring mechanism that more accurately captures long-term parameter contributions during optimization.

\vspace{-2mm}
\section{Method}\label{sec:method}
\vspace{-2mm}
\subsection{Preliminaries}
\vspace{-1mm}
\textbf{Low-Rank Adaptation.} Low-Rank Adaptation (LoRA~\citep{lora2022ICLR}) injected trainable low-rank matrices into frozen pre-trained weights, substantially reducing the number of trainable parameters while preserving downstream task performance. Given a pre-trained parameter matrix $\B{W_0} \in \C{R}^{d_1\times d_2} $ for a specific layer of an LLM, LoRA updated the parameter matrix as: 
\begin{equation}
    \label{classicLora}
    \B{W} = \B{W}_0  + \B{A}\B{B},
\end{equation}
where $\B{A} \in \C{R}^{d_1 \times r}$ and $\B{B} \in \C{R}^{r \times d_2}$ were low-rank trainable matrices with $r \ll \min\{d_1 , d_2\}$.

\textbf{Adaptive LoRA.} A key limitation of LoRA is that it requires manually selecting the rank $r$, which is challenging due to the heterogeneity of intrinsic dimensionalities across layers and the lack of principled guidance for determining appropriate values. To enable adaptive rank selection, singular value decomposition (SVD) is typically applied to the trainable low-rank product $\B{A}\B{B}$ in Eq.~(\ref{classicLora})~\citep{adalora2023ICLR}:
\begin{equation}
    \B{W} = \B{W}_0 + \text{SVD}(\B{A}\B{B}) = \B{W}_0  + \B{P} \Lambda \B{Q},
\end{equation}
where $\B{P} \in \C{R}^{d_1 \times r}$, $\B{Q} \in \C{R}^{r \times d_2}$ are two orthogonal matrices, and the diagonal matrix $\B{\Lambda} = diag\{\lambda_1, \lambda_2, \dots, \lambda_r\} \in \C{R}^{r \times r}$ containing the singular values. We initialize $r$ as an over-parameterized upper bound $r \ll \min\{d_1 , d_2\}$, then prune redundant dimensions via spectral analysis. To determine the final rank, we define an importance score $S_i$ for each singular value $\lambda_i$, which guides the pruning process. Unlike conventional methods that rely solely on magnitude, our proposed scoring method incorporates both the singular value and the sensitivity of its associated parameters, namely the elements in the $i$-th column of $\B{P}$ and the $i$-th row of $\B{Q}$. Specifically, for each $i\in \{1,\ldots,r\}$, we estimate $S_i$ by aggregating two components. First, $s_\lambda(\cdot)$ measures the intrinsic strength of the singular value; Second, $s_e(\cdot)$ quantifies the importance of the parameters with the $i$-th column of matrix $\B{P}$ and the $i$-th row of matrix $\B{Q}$. The final score $S_i$ is computed as~\cite{adalora2023ICLR}:
\begin{equation}
    \label{allScore}
    S_{i}=s_\lambda (\lambda_{i})+\frac{1}{d_1}\sum_{k=1}^{d_1}s_{snr}({P}_{ki})+\frac{1}{d_2}\sum_{k=1}^{d_2}s_{snr}({Q}_{ik}),
\end{equation}
where $s_\lambda(\lambda_{i})=|\lambda_{i}|$ denotes the magnitude of the singular value, and $s_{snr}(\cdot)$ is a specific importance score function that measures the importance of individual weight on the training loss function. Existing methods~\citep{adalora2023ICLR} for measuring parameter importance are primarily based on simple gradient sensitivity $\left| w_{ij} \nabla_{w_{ij}} \mathcal{L} \right|$, where $w_{ij}$ is a single parameter in model. However, this simple gradient sensitivity-based method suffers from the following limitations:\par
$ \bullet $
\textbf{Lack of Structural Interpretability:} Simple gradient sensitivity-based method evaluate weights independently, ignoring the structured interactions among parameter groups. In settings like LoRA, where parameters operate collectively within subspaces, such element-wise assessments fail to capture their joint contribution, thereby limiting interpretability at the structural level.\par
$ \bullet $
\textbf{Instantaneous Parameter Sensitivity:} Simple gradient sensitivity-based method capture only the instantaneous impact of a parameter on the loss function, overlooking its accumulated or long-term contribution throughout training. This limitation can result in unstable or misleading estimates.\par
$ \bullet $
\textbf{Gradient Saturation:} In transformer-based LLMs, activation functions such as ReLU may lead to gradient saturation in inactive regions, where the gradient signal vanishes entirely. As a result, the estimated importance of the affected parameters becomes unreliable.\par
Figure~\ref{fig:gradient_comparison} illustrates why (a) the simple gradient method fails in gradient-saturated regions, while (b) the integrated gradient method provides more reliable parameter importance estimation through a comparative demonstration. To address these limitations, we estimate parameter importance using Integrated Gradients (IG) in the parameter space. IG integrates the gradients along the path from 0 to 1, thereby capturing the non-local sensitivity and overall impact of the gradients. This method not only accounts for the cumulative effect of the parameter gradients along the integration path but also effectively bypasses saturation regions, where gradient signals typically vanish. By considering the entire path, this method ensures a more accurate estimation of parameter importance, particularly in regions where simple gradient-based methods may fail due to vanishing gradients or saturation.
\begin{figure}[h]
    \centering
    \begin{subfigure}[b]{0.45\textwidth}
        \centering
        \includegraphics[width=\linewidth]{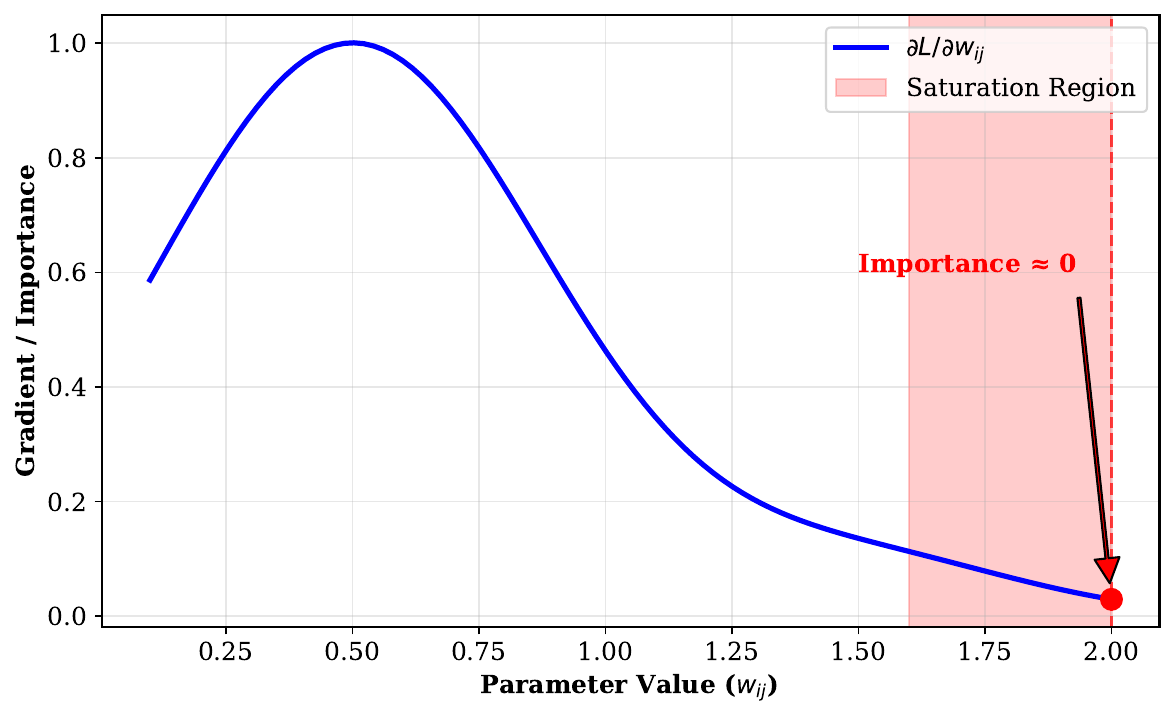}
        \vspace{-6mm}
        \caption{Simple Gradient (Instantaneous)}
        \label{fig:simple_gradient}
    \end{subfigure}
    \hfill
    \begin{subfigure}[b]{0.45\textwidth}
        \centering
        \includegraphics[width=\linewidth]{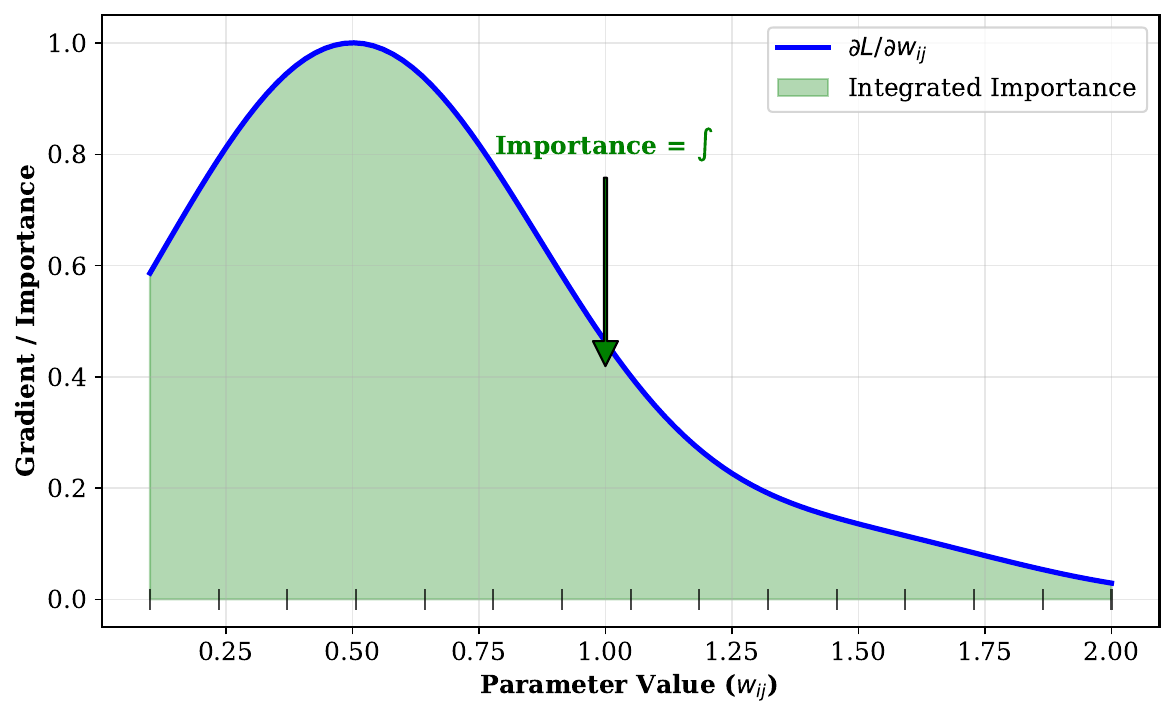}
        \vspace{-6mm}
        \caption{Integrated Gradients (IG)}
        \label{fig:integrated_gradients}
    \end{subfigure}
    \vspace{-3mm}
    \caption{Comparison of parameter importance scoring methods. (a) The simple gradient method fails in saturated regions, assigning near-zero importance. (b) Integrated gradients compute importance by integrating along the path from initial to final parameter values, capturing the actual total contribution.}
    \label{fig:gradient_comparison}
\end{figure}
\vspace{-2mm}
\subsection{Importance Scoring via Integrated Gradients} 
\label{sec:IG}
\vspace{-1mm}
Integrated Gradients (IG~\citep{IG2017ICML}) is an attribution method originally developed to improve the interpretability of deep neural networks by attributing a model's output to its input features. It quantifies the contribution of each input feature by integrating the gradients of the output with respect to the input, along a path from a baseline to the actual input. \par
Inspired by this idea, we propose \MyMethod, which extends IG to the parameter space for importance estimation in LLMs. Specifically, we integrate the gradients of the loss function with respect to model parameters along a continuous path from a baseline (e.g., zero) to the actual trained weights, thereby capturing the cumulative influence of each parameter on the training loss function. This parameter-space IG formulation addresses key limitations of conventional gradient-based importance scores, such as limited structural interpretability, over-reliance on local (instantaneous) sensitivity, and susceptibility to gradient saturation. Consequently, it provides more stable and comprehensive estimates of parameter importance for transformer-based LLMs. \par
Formally, given a weight matrix $\Delta \B{W}$, we denote by $w_{ij}$ its $(i,j)$-th entry, representing a specific weight. Let $\mathcal{L}$ denote the loss function of the LLMs. {Since Integrated Gradients (IG) requires a baseline representing a state of no information, we choose 0 as the value for \(\Delta \B{W}^{(0)} \) as the baseline, and compute the importance score of \( w_{ij} \) under IG as:}
\begin{equation}\label{IG_Label}
    \begin{split}
    & s_e(w_{ij}) = \left| (w_{ij} - \Delta w^{(0)}_{ij}) \int_{\alpha=0}^1  \frac {\partial \mathcal{L}(\alpha (\Delta \B{W}-\Delta \B{W}^{(0)}))} {\partial w_{ij}}  d \alpha \right| = \left| w_{ij} \int_{\alpha=0}^1  \frac {\partial \mathcal{L}(\alpha \Delta \B{W})}{\partial w_{ij}}  d \alpha \right|,
    \end{split}
\end{equation}
{where $\Delta w^{(0)}_{ij} \in \Delta \B{W}^{(0)}$}. Due to the massive number of parameters in LLMs, the loss function $\mathcal{L}$ exhibits strong non-convexity and highly nonlinear dependencies in the parameter space. As a result, Eq.~(\ref{IG_Label}) involves a high-dimensional integral that lacks a closed-form solution. To approximate it, we discretize the path $[0,1]$ into $N$ equal intervals with nodes $\alpha_k = k/N$ $(k=1,\ldots,N-1)$ and apply the trapezoidal rule, yielding:
\begin{equation}\label{TrapezoidalRule}
\hat{s}_e(w_{ij}) \approx \frac{|w_{ij}|}{2N} \left| \frac{\partial \mathcal{L}(0)}{\partial w_{ij}} + 2 \sum_{k=1}^{N-1} \frac{\partial \mathcal{L}\left(\alpha_k \Delta \B{W}\right)}{\partial w_{ij}} + \frac{\partial \mathcal{L}(\Delta \B{W})}{\partial w_{ij}} \right|.
\end{equation}
Note that Eq.~(\ref{TrapezoidalRule}) requires gradient evaluations at \( N+1 \) points, which leads to \( O(N) \) forward-backward passes for each weight \( w_{ij} \), making it computationally expensive in large models. To mitigate this computational burden, we propose a \textbf{stochastic approximation} strategy: during fine-tuning, We randomly sample a single integration point \( \alpha_k = k/N \) for each mini-batch from a set \( \{1/N, \ldots, (N-1)/N\} \) that follows a uniform distribution. Consequently, for the \( p \)-th mini-batch, the importance score of \( w_{ij} \) is approximated as:
\begin{equation}  
\label{TrapezoidalRule_onePoint}
    \hat{s}^p_e(w_{ij}) \approx \frac{\left|w_{ij}\right|}{2N} \left| \frac{\partial \mathcal{L}(0)}{\partial w_{ij}} + 2 \frac{\partial \mathcal{L}\left(\alpha_k \Delta \B{W}\right)}{\partial w_{ij}} + \frac{\partial \mathcal{L}(\Delta \B{W})}{\partial w_{ij}} \right|.
\end{equation}
At the end of the $t$-th training epoch (which consists of $M$ mini-batches), we compute the aggregated importance score of $w_{ij}$ as follows:
\begin{equation}\label{importanceScoreInEpoch}
    s_{agg}(w_{ij}) = \frac{1}{M}\sum_{p=1}^M \hat{s}^p_e(w_{ij}).
\end{equation}
{Theorem~\ref{thm:trap-se} bounds the error of our estimator, quantifying the gap between the exact IG score in Eq.~(\ref{IG_Label}) and the epoch-level estimator in Eq.~(\ref{importanceScoreInEpoch}); the total error is $O(N^{-2})$ (discretization) $+$ $O(M^{-1/2})$ (sampling).}
\begin{theorem}
    \label{thm:trap-se}   
    {Let \( s_e(w_{ij}) \) be the importance score based on Integrated Gradients (IG) as defined in Eq.~(\ref{IG_Label}), and let \( s_{agg}(w_{ij}) \) be the epoch-level estimator as defined in Eq.~(\ref{importanceScoreInEpoch}). Define \( g_{ij}(\alpha) = \frac{\partial \mathcal{L}(\alpha \Delta \mathbf{W})}{\partial w_{ij}}, \alpha \in [0,1] \).}
    
    {We assume the following:}
    \begin{enumerate}
        \item {\( g_{ij} \) is twice continuously differentiable on \( [0,1] \), and there exists a constant \( C_2 < \infty \) such that}
        \begin{equation}
            \sup_{\alpha \in [0,1]} \left| g_{ij}''(\alpha) \right| \leq C_2.
        \end{equation}
        \item {Let \( \alpha_1, \alpha_2, \ldots, \alpha_M \) be i.i.d. samples drawn from the discrete uniform distribution over \( \left\{ \frac{1}{N}, \frac{2}{N}, \ldots, \frac{N-1}{N} \right\} \), and let \( s_{agg}(w_{ij}) \) be defined as in Eq.~(\ref{importanceScoreInEpoch}).}
    \end{enumerate}
    
    {Then, for any \( N, M \geq 1 \) and \( \delta \in (0,1) \), with probability at least \( 1 - \delta \), the following bound holds:}
    \begin{equation}
        \label{eq:ig-epoch-bound-revised}
        \left| s_e(w_{ij}) - s_{agg}(w_{ij}) \right| \leq \frac{|w_{ij}| C_2}{12 N^2} + c |w_{ij}| B \sqrt{\frac{\log(1/\delta)}{M}},
    \end{equation}
    {where \( c > 0 \) is an absolute constant, and \( B \) is a constant such that \( |g_{ij}(\alpha)| \leq B \) for all \( \alpha \in \left\{ \frac{1}{N}, \frac{2}{N}, \ldots, \frac{N-1}{N} \right\} \). The proof is provided in Appendix~\ref{sec:proof_of_thm1}.}
\end{theorem}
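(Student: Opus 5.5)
The plan is to split the error into a deterministic discretization part and a stochastic sampling part with the triangle inequality, putting the composite trapezoidal value in between. Let $I_{ij}=\int_0^1 g_{ij}(\alpha)\,d\alpha$, so that $s_e(w_{ij})=|w_{ij}|\,|I_{ij}|$. Let
\[
T_N=\frac{1}{2N}\Bigl(g_{ij}(0)+2\sum_{k=1}^{N-1}g_{ij}(k/N)+g_{ij}(1)\Bigr)
\]
be the trapezoidal sum inside Eq.~(\ref{TrapezoidalRule}). Then
\[
\bigl|s_e(w_{ij})-s_{agg}(w_{ij})\bigr|\le |w_{ij}|\,\bigl|\,|I_{ij}|-|T_N|\,\bigr|+\bigl|\,|w_{ij}||T_N|-s_{agg}(w_{ij})\bigr| .
\]
The reverse triangle inequality $\bigl||a|-|b|\bigr|\le|a-b|$ removes the outer absolute values. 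Everything therefore reduces to bounding the signed quantities $|I_{ij}-T_N|$ and the deviation of the averaged single-point sums from $T_N$.

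For the discretization term I will use the classical composite trapezoidal error formula under assumption~1. On each subinterval $[\alpha_{k-1},\alpha_k]$ of length $h=1/N$, Peano's kernel (or the mean value form of the interpolation remainder) gives an error $-\tfrac{h^3}{12}g_{ij}''(\xi_k)$. Summing over the $N$ subintervals and using $|g_{ij}''|\le C_2$ gives $|I_{ij}-T_N|\le C_2/(12N^2)$. After multiplying by $|w_{ij}|$, this is exactly the first term of Eq.~(\ref{eq:ig-epoch-bound-revised}).

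For the sampling term, write $s_{agg}=\frac{1}{M}\sum_{p=1}^M |w_{ij}|\,|X_p|$. Here $X_p$ is the signed single-node sum in Eq.~(\ref{TrapezoidalRule_onePoint}) evaluated at the i.i.d. node $\alpha_p$. The $X_p$ are i.i.d. and bounded, since each interior gradient satisfies $|g_{ij}(\alpha_p)|\le B$ by assumption, and the endpoint values are fixed constants that can be absorbed into $B$ by enlarging it if needed. Hoeffding's inequality for bounded i.i.d. variables then gives, with probability at least $1-\delta$,
\[
\Bigl|\tfrac1M\sum_p |X_p|-\mathbb{E}|X_1|\Bigr|\le cB\sqrt{\log(1/\delta)/M},
\]
with $c$ absolute. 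A two-sided version costs $\log(2/\delta)$, which is absorbed into $c$. Multiplying by $|w_{ij}|$ gives the second term.

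The step I expect to be the main obstacle is linking the mean of the single-node estimator to $|T_N|$. As written, $\mathbb{E}X_1=\frac{1}{2N}(g(0)+g(1))+\frac{1}{N(N-1)}\sum_{k}g(k/N)$, which is not $T_N$. In addition, the absolute value inside each $\hat s^p_e$ means $\mathbb{E}|X_1|\neq|\mathbb{E}X_1|$ in general. My first attempt will be to read the estimator in its intended normalized form, in which the interior node carries the weight $(N-1)$ so that $\mathbb{E}X_1=T_N$ exactly. I would also average the signed $X_p$ before taking the absolute value, which is the natural reading of ``mini-batch stochastic quadrature''. Then the reverse triangle inequality reduces the sampling term to $|\frac1M\sum_p X_p-T_N|$, and Hoeffding applies directly. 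If the literal form has to be kept, I would instead bound the extra bias $|\mathbb{E}X_1-T_N|$ separately. It is at most a constant times $B$, so it cannot be absorbed into $O(M^{-1/2})$; this is why I would state the normalization assumption explicitly in the proof.
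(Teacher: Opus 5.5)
Your plan coincides with the paper's proof: it also inserts the composite trapezoidal sum $\mathcal{T}_N$, uses the classical bound $|\int_0^1 g_{ij}-\mathcal{T}_N|\le C_2/(12N^2)$, and handles sampling by Hoeffding after simply \emph{asserting} $s_{agg}(w_{ij})=|w_{ij}|\,|\widetilde{\mathcal{T}}_M|$ with $\widetilde{\mathcal{T}}_M=\frac{1}{2N}\bigl[g_{ij}(0)+2(N-1)\bar f_M+g_{ij}(1)\bigr]$, $\bar f_M=\frac1M\sum_p g_{ij}(\alpha_p)$ --- exactly the normalized, average-before-absolute-value reading you propose, under which $\mathcal{T}_N-\widetilde{\mathcal{T}}_M=\frac{N-1}{N}(\mu-\bar f_M)$ with $\mu$ the interior-node mean, so the endpoint terms cancel and $B$ need not be enlarged. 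Your observation that the literal Eqs.~(\ref{TrapezoidalRule_onePoint})--(\ref{importanceScoreInEpoch}) carry an $O(B)$ bias that cannot be absorbed into the $M^{-1/2}$ term is correct and is a point the paper passes over silently; one further caveat, shared by your argument and the paper's, is that absorbing $\log(2/\delta)$ into $c\log(1/\delta)$ with an absolute constant $c$ is only valid for $\delta$ bounded away from $1$ (e.g.\ $\delta\le 1/2$).
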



\vspace{-2mm}
\subsection{Uncertainty-Aware Scoring}
\label{sec:uncertainty}
\vspace{-1mm}
Recent studies~\citep{zhang2022platon} demonstrate that stochastic sampling and complex training dynamics result in high variance in importance score estimates via Eq.~(\ref{importanceScoreInEpoch}), thereby undermining their reliability. To alleviate this issue, we incorporate two complementary mechanisms: sensitivity smoothing and uncertainty quantification, defined respectively as:
\begin{equation}
    \label{smoothing}
    \Bar{s}_e^{(t)}(w_{ij}) = \beta_1 \Bar{s}_e^{(t-1)}(w_{ij}) + (1 - \beta_1)s_{agg}^{(t)}(w_{ij}),
\end{equation}
\begin{equation}
    \label{uncertainty}
    \Bar{U}^{(t)}(w_{ij}) = \beta_2 \Bar{U}^{(t-1)}(w_{ij}) + (1 - \beta_2)\left|s_{agg}^{(t)}(w_{ij}) - \Bar{s}_e^{(t)}(w_{ij})\right|.
\end{equation}
We define the final importance score as:
\begin{equation}
\label{FinallImportanceScoreInEpoch}
s_{{snr}}^{(t)}(w_{ij}) = \mathsf{SNR}_t = \frac{\Bar{s}_e^{(t)}(w_{ij})}{\Bar{U}^{(t)}(w_{ij}){+\epsilon}},
\end{equation}
where the numerator $\Bar{s}_e^{(t)}(w_{ij})$ captures the persistent influence of the parameter $w_{ij}$ via exponential moving averaging of gradient-parameter correlations. The denominator $\Bar{U}^{(t)}(w_{ij})$ quantifies epistemic uncertainty by measuring deviations from the smoothed sensitivity across mini-batches. {$\epsilon$ is a very small number to prevent the denominator in Eq.~(\ref{FinallImportanceScoreInEpoch}) from being $0$.} This ratio can be interpreted as a signal-to-noise ratio (SNR), providing a criterion for assessing the importance of parameters. Specifically, a larger smoothed sensitivity $\Bar{s}_e^{(t)}(w_{ij})$ indicates that $w_{ij}$ consistently exerts strong influence on the loss function. In contrast, a smaller uncertainty $\Bar{U}^{(t)}(w_{ij})$ suggests lower variability, reinforcing the reliability of the signal. A high-probability stability guarantee for the EMA ratio score $\mathsf{SNR}_t$ is presented in Appendix~\ref{sec:proof_of_thm2}. We summarize the complete workflow of \MyMethod\ in Algorithm~\ref{alg_1}.

\begin{algorithm}[h]
    \caption{\MyMethod}
    \label{alg_1}
    \begin{algorithmic}[1]
    \small
    \REQUIRE Dataset $\mathcal{D}$; the number of total iterations $T$; a pre-trained parameter matrix $\B{W_0} \in \C{R}^{d_1 \times d_2}$ of a large language model, number of mini-batches $M$; budget of remaining singular values $b$; randomly initialize trainable low-rank matrices $\B{A} \in \C{R}^{d_1 \times r}$ and $\B{B} \in \C{R}^{r \times d_2}$; hyperparameters $\beta_1,\beta_2$.
    
    \FOR{$t = 1$ to $T$}
    \FOR{$p = 1$ to $M$}
       \STATE  Sample a mini-batch from $\mathcal{D}$ and train $\B{A}$ and $\B{B}$.
       \STATE  Perform SVD on the matrix product $\B{A}\B{B}$ to obtain $\B{P} \B{\Lambda} \B{Q} = \mathrm{SVD}(\B{A}\B{B})$, where $\B{\Lambda} = diag\{\lambda_1, \lambda_2, \dots, \lambda_r\}$.
       \STATE  Compute the $\hat{s}^p_e$ in Eq.~(\ref{TrapezoidalRule_onePoint}) for every parameter in $\B{P},\B{Q}$.
    \ENDFOR
        \STATE Compute the aggregated importance score $s_{agg}$ in Eq.~(\ref{importanceScoreInEpoch}) for every parameter in $\B{P},\B{Q}$.
        \STATE Compute the  $\Bar{s}^{(t)}_e$ in Eq.~(\ref{smoothing}) and $\Bar{U}^{(t)}$ in Eq.~(\ref{uncertainty}) for every parameter in $\B{P},\B{Q}$.
        \STATE Update the final importance score $s_{snr}^{(t)}$ in Eq.~(\ref{FinallImportanceScoreInEpoch}).
        \STATE Compute the importance score of each singular value $S_i$ in Eq.~(\ref{allScore}) for $\B{P} \B{\Lambda} \B{Q}$.
        \STATE Find the top $b$ eigen value: $\hat{\lambda}_1,\hat{\lambda}_2,\dots,\hat{\lambda}_b$ by importance score $S_i$.
        \STATE Set ${\tilde{\B{\Lambda}}} \gets \mathrm{diag}(\hat{\lambda}_1,\hat{\lambda}_2,\ldots,\hat{\lambda}_b,0,\dots,0)$ .
        \STATE Update $\B{A} \gets \B{P}_{:,\pi_{1:b}}\tilde{\B{\Lambda}}^{1/2},\ \B{B} \gets \tilde{\B{\Lambda}}^{1/2}\B{Q}_{\pi_{1:b},:}^\top \quad \triangleright$ {The subscript $\pi$ denotes the index set obtained by sorting the columns of $\mathbf{P}$ and $\mathbf{Q}$ in descending order; $\pi_{1:k}$ represents the indices of the first $b$ selected columns; $\mathbf{P}_{1:\pi_{1:b}}$ represents selecting the first $b$ columns according to the order defined by $\pi$.}

    \ENDFOR
    \ENSURE $\B{W}=\B{W}_0+\B{A}\B{B}$
    \end{algorithmic}
    \end{algorithm}

\section{Experiments}\label{sec:experiments}
\vspace{-2mm}
\subsection{Experimental Settings}
\vspace{-1mm}
\label{subsec:experimental_settings}
\textbf{Computational Resources.} All experiments are implemented in PyTorch and conducted on an NVIDIA L40 GPU (48GB) running Ubuntu 18.04.1.\par
\textbf{Pretrained Backbone Models.} We use RoBERTa-large model~\citep{liu2019roberta} as the backbone for the GLUE tasks. For the remaining tasks, we adopt Qwen-2.5-0.5B model~\footnote{https://huggingface.co/Qwen/Qwen2.5-0.5B}. We further validate the robustness and generalization of \MyMethod\ via a backbone ablation, fine-tuning larger-parameter backbones (Llama-2-7B~\citep{touvron2023llama}, Llama-3-8B~\citep{Dubey2024TheL3}, DeepSeek-R1-Distill-Qwen-2.5-7B~\footnote{https://huggingface.co/deepseek-ai/DeepSeek-R1-Distill-Qwen-7B}) on multiple datasets.\par
\textbf{\MyMethod\ Configuration.} For the BoolQ, ARC, GSM8K, and AQuA tasks, we perform instruction tuning. The initial LoRA rank is set to $r^{(0)} = 32$, and pruned to an average rank of $r^{(1)} = 16$, achieving pruning $50$\% rank reduction. For the GLUE tasks, we follow AdaLoRA's setup, using a classification or regression head, with $r^{(0)} = 2$ pruned to an average $r^{(1)} = 1$. During the fine-tuning, \MyMethod\ selects the scaling factor $\alpha$ from $N=20$ uniformly spaced values in the interval (0, 1). Rank pruning begins at epoch $2$ and ends at epoch $5$, performed at every one-fifth of an epoch. After pruning, we fine-tune the modules with early stopping (patience = $10$ steps) to restore performance. Inference is performed using beam search with a width of $3$.\par
\textbf{Reproducibility.} Each task is run with $5$ different random seeds, and we report the median test performance. All predictions are generated using the model's language modeling head, which is conditioned on a given prompt or instruction. Additional training configurations are available in Appendix~\ref{sec:configurations_detail}.\par

\subsection{Datasets and Evaluation Metrics}
We group the tasks into 2 categories and compare the proposed~\MyMethod\ against several baselines: (i) \textbf{GLUE Benchmark Datasets}~\citep{wang2018} include a diverse set of language understanding tasks, such as paraphrase detection (MRPC, QQP), sentiment classification (SST-2), natural language inference (MNLI, RTE, QNLI), and linguistic acceptability (CoLA). (ii) \textbf{Mathematical and Common-Sense Reasoning Datasets} include two mathematical reasoning tasks: AQuA~\citep{Li2024WhatHI} and GSM8K~\citep{Cobbe2021TrainingVT}, and four common-sense question answering tasks: ARC-e, ARC-c~\citep{Clark2018ThinkYH}, BoolQ~\citep{clark2019boolq} and COPA~\citep{copa2011}. Detailed dataset descriptions, statistical, and evaluation metrics are in Appendix~\ref{sec:dataset_metrics}.
\subsection{Baseline Methods}
To evaluate the performance of the proposed~\MyMethod\ method in fine-tuning LLMs, we compare it against the following representative baseline: (i) \textbf{LoRA and Its Variants.} We evaluate four LoRA-based approaches: {LoRA}~\citep{lora2022ICLR}, AdaLoRA~\citep{adalora2023ICLR}, {DoRA}~\citep{liu2024dora}, {AutoLoRA}~\citep{Xu2023AutoLoRaAP} and GoRA~\citep{gora}. (ii) \textbf{Other PEFT Method.} We also evaluate the following non-LoRA parameter-efficient fine-tuning methods: {Housbly-Adapter}~\citep{houlsby2019parameter}, {P-Tuning v2}~\citep{Ptuningv2}, {(IA)$^{3}$}~\citep{IA2022nips}, and {SSP}~\citep{ssp2022}. (iii) \textbf{Full Fine-tuning Method.} For reference, we also include results from full-parameter fine-tuning (denoted as Full FT). All baseline methods are implemented using publicly available codebases. Hyperparameter settings are listed in Appendix~\ref{sec:configurations_detail}, and additional descriptions of baselines are provided in Appendix~\ref{sec:baseline_detail}.

\subsection{Main Results}
\vspace{-1mm}
\begin{table*}[h]
  \centering
  \caption{\label{tab:results_full_data} Performance comparison of fine-tuning methods on the GLUE task using RoBERTa-large. All results are reported as the median over 5 runs with
  different random seeds. Bold and Underline indicate the best and the second-best results. The metric for each task is explained in Appendix~\ref{sec:metrics_detail}.}
  \vspace{-8pt}
  \resizebox{0.8\textwidth}{!}{
  \setlength{\tabcolsep}{0.35em}
  \renewcommand{\arraystretch}{1}
  \begin{tabular}{l|c|cccccccc|c}
  \toprule
  \multirow{2}*{\textbf{Method}}    &   \multirow{2}*{\textbf{\# Params}}  &   \textbf{CoLA}   &  \textbf{SST-2}     &    \textbf{MRPC}      &   \textbf{QQP}  &    \textbf{STS-B}    &    \textbf{MNLI}    &   \textbf{QNLI} &    \textbf{RTE}  &   \multirow{2}*{\textbf{Avg.}} \\ 
  & &  \textbf{(mcc)}   &   \textbf{(acc)}     &  \textbf{(acc-f1)}   &   \textbf{(acc-f1)}  &   \textbf{(corr)}  &   \textbf{(acc)}   &  \textbf{(acc)}   &    \textbf{(acc)}  \\
  \midrule 
  Full FT  &  355M  &    69.19   &   95.63    &   89.46   &    \textbf{91.10}    &    91.60    &   90.01    &   94.03    &    86.94   &   88.50    \\
  \midrule 
  Housbly-Adapter   &    0.35M    &    67.80   &    94.38     &    89.75   &    89.41     &    91.08    &    90.28      &     93.52    &     84.36 & 87.57   \\
  P-tuning v2  &   0.31M    &   67.35   &   93.13     &   88.49     &    88.63   &  90.41     &    89.19     &   91.94   &   82.42  & 86.45\\
  (IA)$^3$   &    0.33M    &   68.62    &    93.82    &  89.54    &   89.78     & 90.84    &   89.87     &    92.60    &    82.75  & 87.23\\
  SSP   &    0.36M    &    69.89    &     94.96  &   90.08   &  90.14         &     91.37    &  90.42     &   94.16    &   84.88  & 88.24\\
  \midrule 
  LoRA  &   0.33M    &   68.71   &    94.84    &  89.71     &   90.26       & 91.63    &    90.34    &  93.87    &  85.56  & 88.12\\
  AdaLoRA  &   0.35M    &    70.04    &    95.62    &   \underline{90.34}   &   {90.37}    &  91.57     &    90.18     &  \underline{94.29}    &  87.06 & 88.68\\
  DoRA   &   0.33M    &   70.26    &   \underline{95.80}    &   90.12    &    90.16    &   \underline{91.68}    &   \underline{90.43}    &  94.17     &   87.38   & 88.75\\
  AutoLoRA    &   0.34M    &   \underline{70.47}    &    95.53     &    90.26    &  90.31  &   91.52   &   90.26      &    94.08    &    \underline{87.64} & \underline{88.76}\\
  \midrule 
  \MyMethod\    &  0.33M  &    \textbf{71.93}  &   \textbf{96.17} &   \textbf{90.69}   & \underline{90.68}    &  \textbf{91.95}  &   \textbf{90.76}   &  \textbf{94.72}   &   \textbf{88.46}  & \textbf{89.42}\\
      \bottomrule
  \end{tabular}}
\end{table*}
\textbf{GLUE Benchmark Results.} We evaluate the performance of~\MyMethod\ against baseline methods on the GLUE development set using the RoBERTa-large model. The results are presented in Table~\ref{tab:results_full_data}. Under the constraint of fine-tuning only $1$\% of model parameters, \MyMethod\ achieves performance that is comparable to or surpasses existing approaches across all tasks. Notably, on the CoLA task, \MyMethod\ achieves a Matthews correlation coefficient (MCC) of $ 71.93\%$, outperforming the best baseline by $ 1.5\%$. On the RTE task, it exceeds the second-best method, AutoLoRA, by $0.8$\% in accuracy (acc). Similar improvements are also observed on the remaining tasks, demonstrating the robustness of \MyMethod. Averaged across all tasks, \MyMethod\ achieves the highest overall performance. Importantly, it maintains strong parameter efficiency, requiring only $0.33$ million trainable parameters, comparable to leading PEFT methods, while significantly outperforming full-parameter fine-tuning in both accuracy and efficiency.\par
\begin{table*}[h]
  \centering
  \caption{\label{tab:results_main_1} Performance comparison of fine-tuning methods on the Mathematical and common-sense reasoning task using the Qwen-2.5-0.5B. All results are reported as the median over $5$ runs with different random seeds. Bold and Underline indicate the best and the second-best results.} 
  \vspace{-8pt}
  \resizebox{0.7\textwidth}{!}{
  \setlength{\tabcolsep}{0.6em}
  \renewcommand{\arraystretch}{1.1}
  \begin{tabular}{l|c|ccccc|c}
  \toprule
  \multirow{2}*{\textbf{Method}}   &   \multirow{2}*{\textbf{\# Params} }   &     \textbf{BoolQ}   &   \textbf{ARC-e}        &   \textbf{ARC-c}     &   \textbf{GSM8K}    &   \textbf{AQuA}  & \multirow{2}*{\textbf{Avg.}} \\ 
  &  & \textbf{(acc)}   & \textbf{(acc)}   & \textbf{(acc)}   & \textbf{(acc)}   & \textbf{(acc)}     \\
  \midrule
    Full FT   &  494.0M  &   \underline{81.74}     &    \textbf{74.82}  &    \underline{54.98} &     \textbf{34.64}  &    \underline{48.72} & \underline{58.98}    \\
    Housbly-Adapter   &  9.0M   &     78.36      &   71.04    &  53.26   &  28.67      &  42.85 & 54.84\\
    LoRA   &    8.8M  &    78.94      &  72.78     &    54.38    &  31.42   &  45.33    & 56.57\\
    AdaLoRA   &   8.9M   &     80.32      &   73.90     &    54.23   &  33.27   &  46.58   & 57.67\\
    {GoRA}   &      8.8M&          {79.24} &        {71.20} &       {51.91}&     {32.07}&     {45.81} & {56.04}\\
    \midrule
    \MyMethod\  &    8.8M   &   \textbf{82.45}    &  \underline{74.62}    &  \textbf{55.67}   &   \underline{34.16}    &   \textbf{48.93}    & \textbf{59.17} \\
    \bottomrule
  \end{tabular}}
\end{table*}
\textbf{Mathematical and Common-Sense Reasoning Benchmark Results.} We further systematically conduct mathematical and common-sense reasoning tasks using the Qwen-2.5-0.5B model, comparing four representative fine-tuning methods: Full Fine-tuning, Adapter, LoRA, AdaLoRA and GoRA. Table~\ref{tab:results_main_1} summarizes the results, where \MyMethod\ consistently achieves performance advantages across most tasks. Specifically, \MyMethod\ achieves state-of-the-art results on BoolQ, ARC-c, and AQuA, outperforming the second-best method by $0.2$\% to $0.8$\% in accuracy. While it does not obtain the highest score on ARC-e and GSM8K, \MyMethod\ fine-tunes only $8.8$M parameters, substantially fewer than full-parameter tuning ($494.0$M), yet delivering comparable performance. Across all evaluated datasets, \MyMethod\ consistently outperforms other parameter-efficient methods with similar parameter budgets, highlighting its strong generalization under tight resource constraints.

\vspace{-1mm}
\subsection{Ablation Study and Analysis}
\vspace{-1mm}
\label{subsec:ablation_studies}

\textbf{Analysis of Training and Inference Efficiency.} So far, we have shown that \MyMethod\ outperforms LoRA, AdaLoRA, and DoRA on BoolQ. A natural concern is whether these gains come at the expense of extra time or memory cost. We fine-tune the Qwen-2.5-0.5B model and report peak training GPU memory and wall-clock training time, as well as inference peak GPU memory and decoding latency, as shown in Table~\ref{tab:results_training_efficiency_analysis}. All methods utilise a similar memory due to the frozen backbone. LoRA trains the fastest but yields smaller gains; DoRA is slower because it maintains normalized weight directions while updating an additional magnitude vector $\rho$, which involves adding normalization/rescaling operations and optimizer states each step. AdaLoRA improves accuracy via sensitivity-based rank pruning in a two-stage schedule; \MyMethod\ adopts a similar two-stage design and thus achieves comparable training time while delivering higher accuracy. For inference, \MyMethod\ matches LoRA, DoRA, and AdaLoRA in memory usage and decoding latency.\par
\begin{table*}[h]
  \centering
  \begin{minipage}[t]{0.51\linewidth}\centering
    \captionof{table}{\label{tab:results_training_efficiency_analysis} The time cost, memory and speed for fine-tuning Qwen-2.5-0.5B on the BoolQ task with different PEFT methods.}
    \vspace{-6pt}
    \setlength{\tabcolsep}{0.35em}
    \renewcommand{\arraystretch}{1}
    \resizebox{\linewidth}{!}{%
      \begin{tabular}{l|cc|cc}
      \toprule
      \multirow{2}{*}{\textbf{Method}} & \multicolumn{2}{c}{\textbf{{Training}}} &\multicolumn{2}{c}{\textbf{{Inference}}} \\ \cmidrule{2-5}
      & {Time cost (h)} & {GPU Mem (GB)} & {Speed (it/s)} & {GPU Mem (GB)} \\
      \midrule
      LoRA     & 0.42 & 10.21 & 5.50 & {10.3}\\
      AdaLoRA  & 0.73 & 10.60 & 5.21 & {10.4}\\
      DoRA     & 0.95 &  9.53 & 5.30 & {10.3}\\
      \midrule
      \MyMethod & 0.87 & 10.32 & 5.23 & {10.3}\\
      \bottomrule
      \end{tabular}}
  \end{minipage}\hfill
  \begin{minipage}[t]{0.45\linewidth}\centering
    \captionof{table}{\label{tab:ablations} Comparison of the performance of different variants of \MyMethod\ on fine-tuning Qwen-2.5-0.5B across BoolQ and GSM8K tasks.                 }
    \vspace{-6pt}
    \setlength{\tabcolsep}{0.8em}
    \renewcommand{\arraystretch}{1.05}
    \resizebox{\linewidth}{!}{%
      \begin{tabular}{l|cc|c}
      \toprule
      \textbf{Method} & \textbf{BoolQ} & \textbf{GSM8K} & \textbf{Avg.}\\
      \midrule
      \MyMethod-1 (w/o $\alpha$)     & 81.87 & 33.76 & 57.82\\
      \MyMethod-2 ($N{=}10$)         & 82.14 & \underline{33.95} & \underline{58.05}\\
      \MyMethod-3 ($N{=}4$)          & 82.02 & 33.83 & 57.93\\
      \MyMethod-4 ($s_e=\bar s_e \cdot \bar U$) & \underline{82.28} & 33.69 & 57.99\\
      \midrule
      \MyMethod                       & \textbf{82.45} & \textbf{34.16} & \textbf{58.31}\\
      \bottomrule
      \end{tabular}}
  \end{minipage}
\end{table*}

\textbf{Ablation Study on Hyperparameters and Importance Scoring.} To assess the sensitivity of \MyMethod\ to its key hyperparameters and scoring components, we perform the ablation study by incrementally disabling or simplifying individual modules. Specifically, we evaluate the following variants: (1) \MyMethod-1 removes the gradient-integrated $\alpha$ coefficient used during both training and pruning; (2) \MyMethod-2 reduces candidate resolution of $\alpha$ from $N = 20$ to $N = 10$; (3) \MyMethod-3 further reduces the candidate set to $N = 4$; and (4) \MyMethod-4 replaces the final importance score in Eq.~(\ref{FinallImportanceScoreInEpoch}) with the alternative formulation in Eq.~(11) from~\cite{adalora2023ICLR}, which combines sensitivity and uncertainty via AdaLoRA's multiplicative strategy~\footnote{AdaLoRA~\citep{adalora2023ICLR} for details.}. As shown in Table~\ref{tab:ablations}, all variants exhibit performance degradation, particularly \MyMethod-3 and \MyMethod-4, which involve more aggressive simplifications. These results confirm that the default configuration of \MyMethod, with high-resolution integrated gradient and uncertainty-aware scoring, is critical in achieving strong performance.\par

\textbf{{Hyperparameter Sensitivity Analysis.}}
{To investigate the sensitivity of \MyMethod\ to key hyperparameters, we varied one hyperparameter at a time while keeping others fixed. We analyzed the effects of mini-batch size \(M\), the number of discrete points for \(\alpha\) (denoted as \(N\)), and smoothing coefficients \(\beta_1\) and \(\beta_2\). Experiments were conducted by fine-tuning the Qwen2.5-0.5B model on the Boolq and GSM8K datasets. The results, shown in Figure~\ref{fig:hyperparameters_sensitivity_boolq} and~\ref{fig:hyperparameters_sensitivity_gsm8k}, demonstrate that \MyMethod\ performs stably across a range of values. Performance improves with larger \(M\) and \(N\), suggesting better adaptability with finer granularity in scaling factor selection. The coefficients \(\beta_1\) and \(\beta_2\) show good robustness, with optimal performance in a moderate range. These findings indicate that \(M\), \(N\), \(\beta_1\), and \(\beta_2\) are robust hyperparameters for \MyMethod.}
\begin{figure}[h]
  \centering
  \begin{subfigure}[b]{0.24\textwidth}
      \centering
      \includegraphics[width=\linewidth]{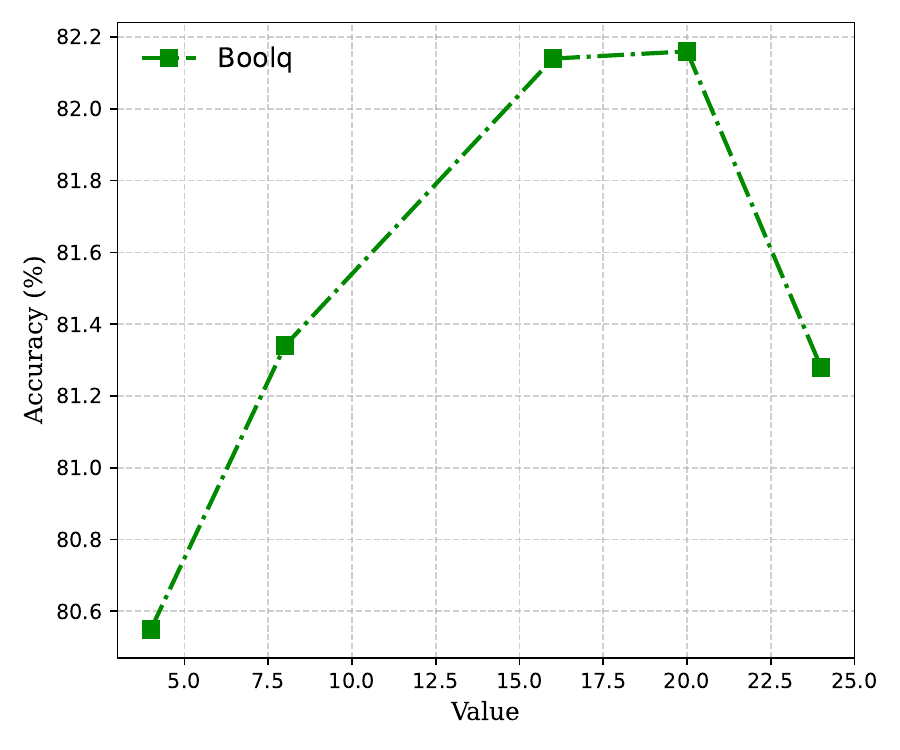}
      \vspace{-6mm}
      \caption{$M$ parameter}
      \label{fig:boolq_M}
  \end{subfigure}
  \begin{subfigure}[b]{0.24\textwidth}
      \centering
      \includegraphics[width=\linewidth]{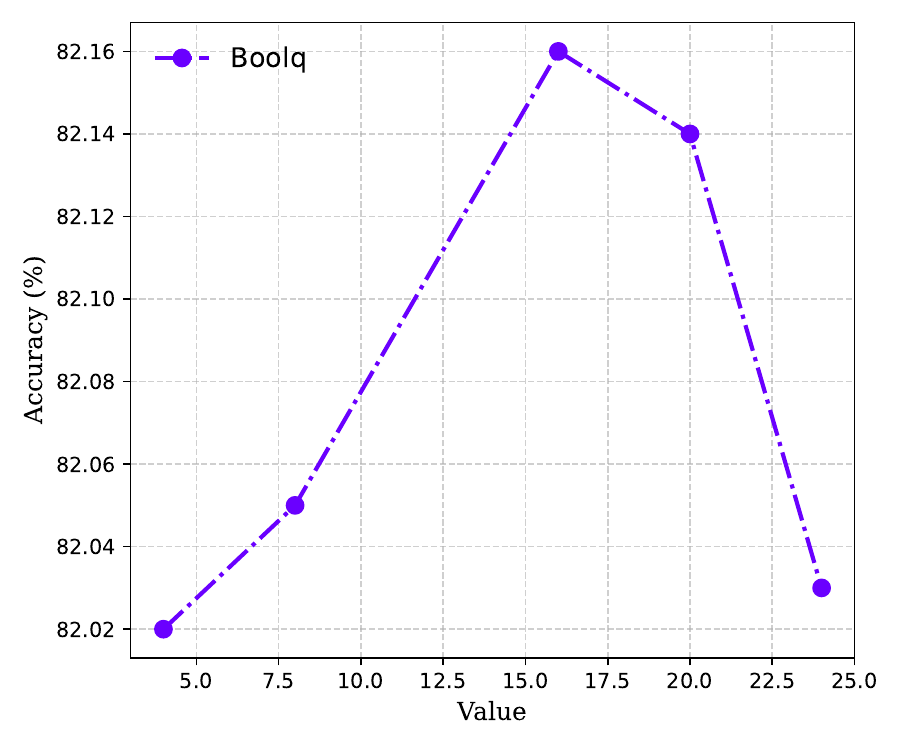}
      \vspace{-6mm}
      \caption{$N$ parameter}
      \label{fig:boolq_N}
  \end{subfigure}
  \begin{subfigure}[b]{0.24\textwidth}
    \centering
    \includegraphics[width=\linewidth]{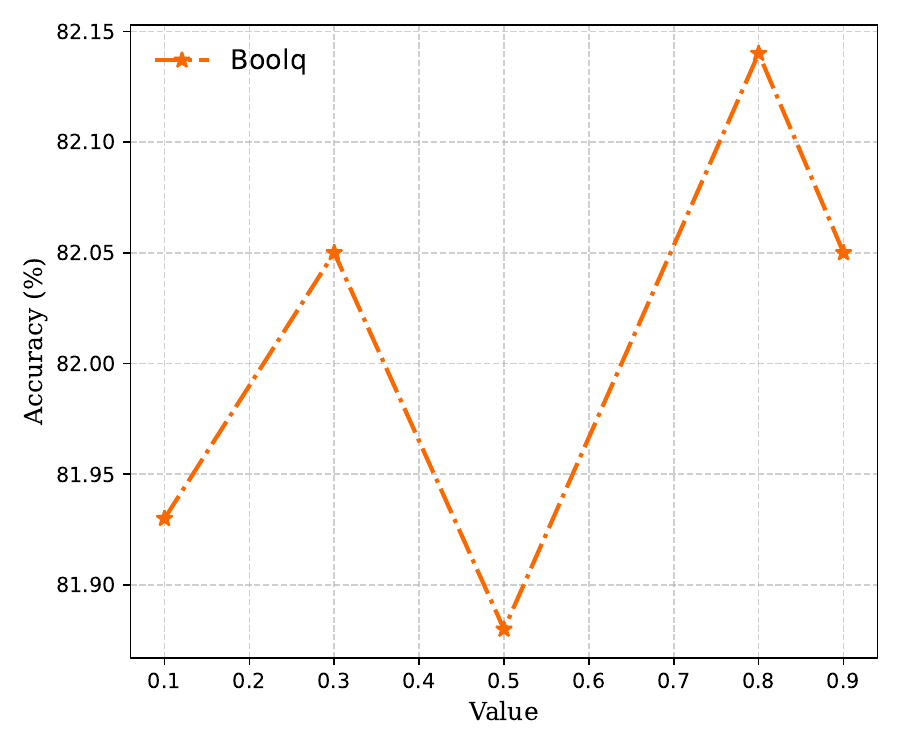}
    \vspace{-6mm}
    \caption{$\beta_1$ coefficient}
    \label{fig:boolq_beta1}
\end{subfigure}
\begin{subfigure}[b]{0.24\textwidth}
  \centering
  \includegraphics[width=\linewidth]{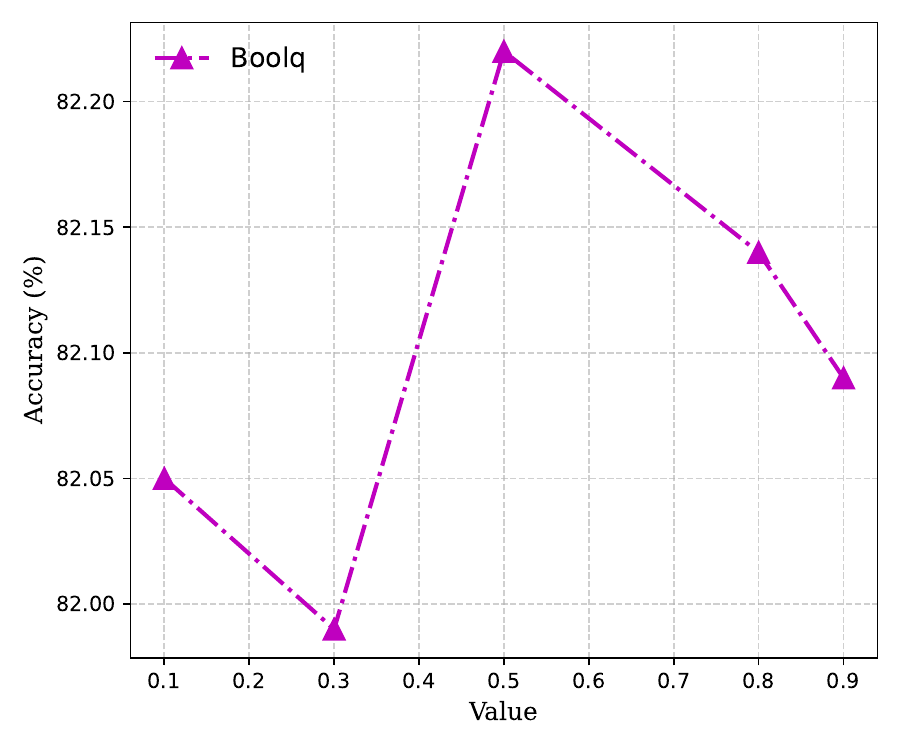}
  \vspace{-6mm}
  \caption{$\beta_2$ coefficient}
  \label{fig:boolq_beta2}
\end{subfigure}
  \vspace{-3mm}
  \caption{The impact of different hyperparameters $M, N, \beta_1, \beta_2$ on performance when fine-tuning the Qwen2.5-0.5B model on the Boolq dataset. Please zoom in 300\% for better clarity.}
  \label{fig:hyperparameters_sensitivity_boolq}
\end{figure}
\begin{figure}[h]
  \centering
  \begin{subfigure}[b]{0.24\textwidth}
      \centering
      \includegraphics[width=\linewidth]{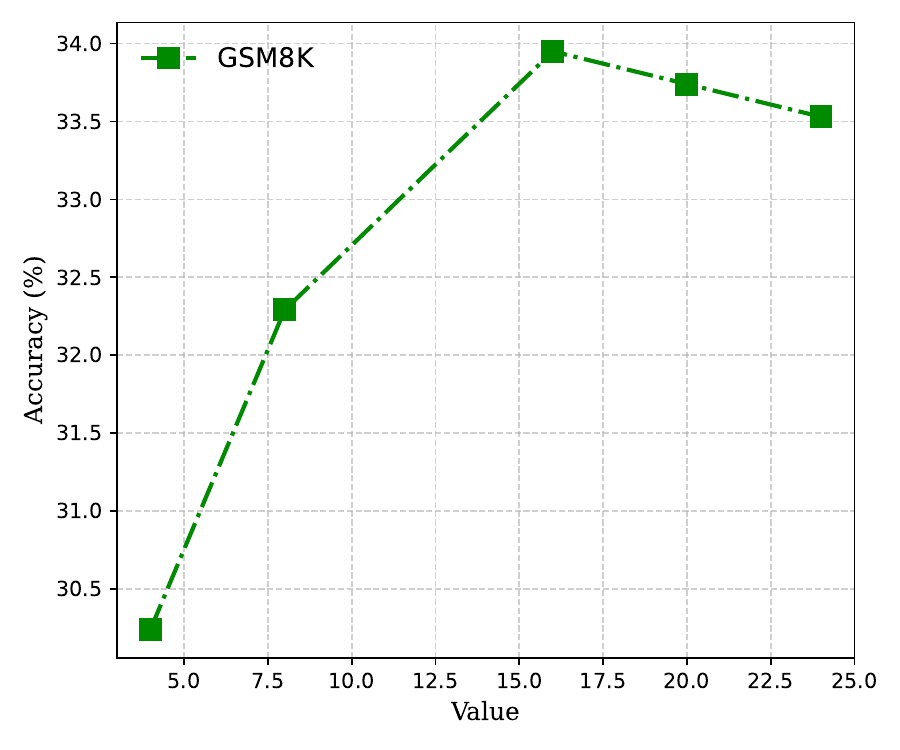}
      \vspace{-6mm}
      \caption{$M$ parameter}
      \label{fig:GSM8K_M}
  \end{subfigure}
  \begin{subfigure}[b]{0.24\textwidth}
      \centering
      \includegraphics[width=\linewidth]{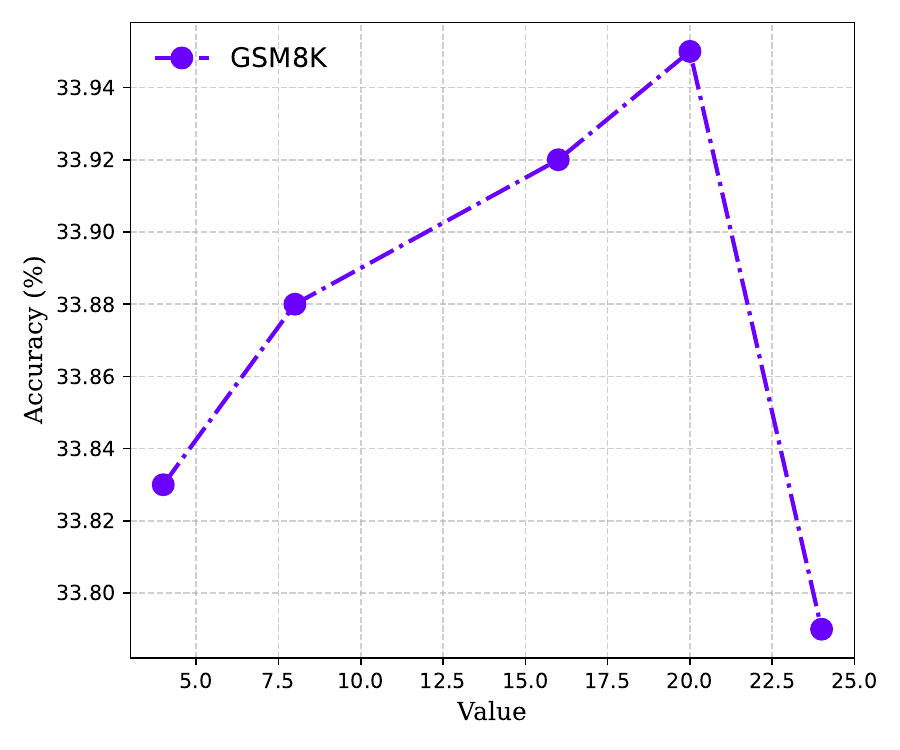}
      \vspace{-6mm}
      \caption{$N$ parameter}
      \label{fig:GSM8K_N}
  \end{subfigure}
  \begin{subfigure}[b]{0.24\textwidth}
    \centering
    \includegraphics[width=\linewidth]{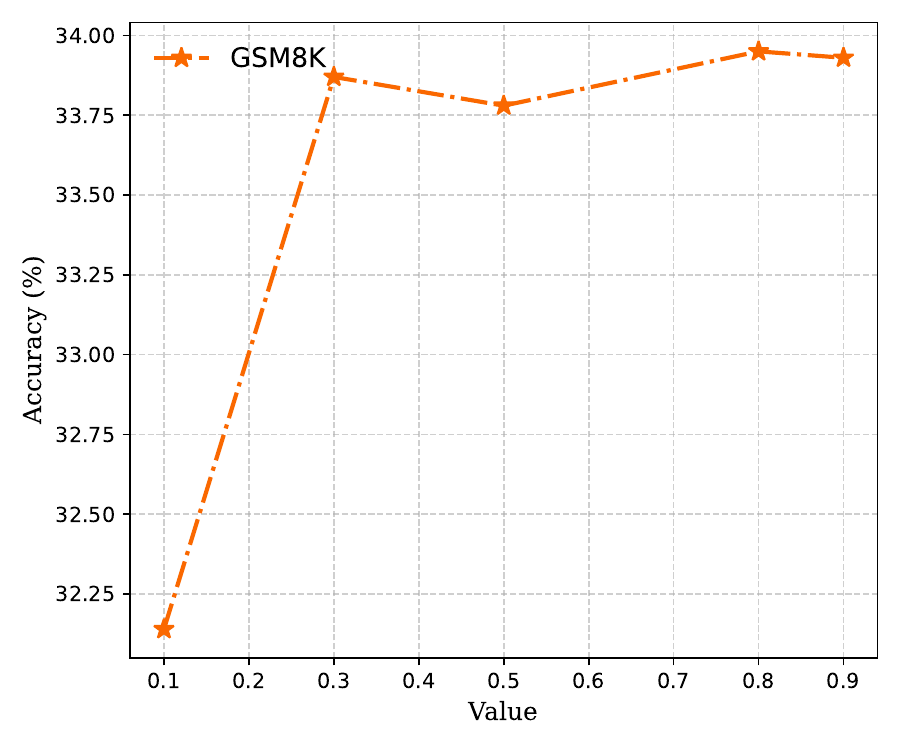}
    \vspace{-6mm}
    \caption{$\beta_1$ coefficient}
    \label{fig:GSM8K_beta1}
\end{subfigure}
\begin{subfigure}[b]{0.24\textwidth}
  \centering
  \includegraphics[width=\linewidth]{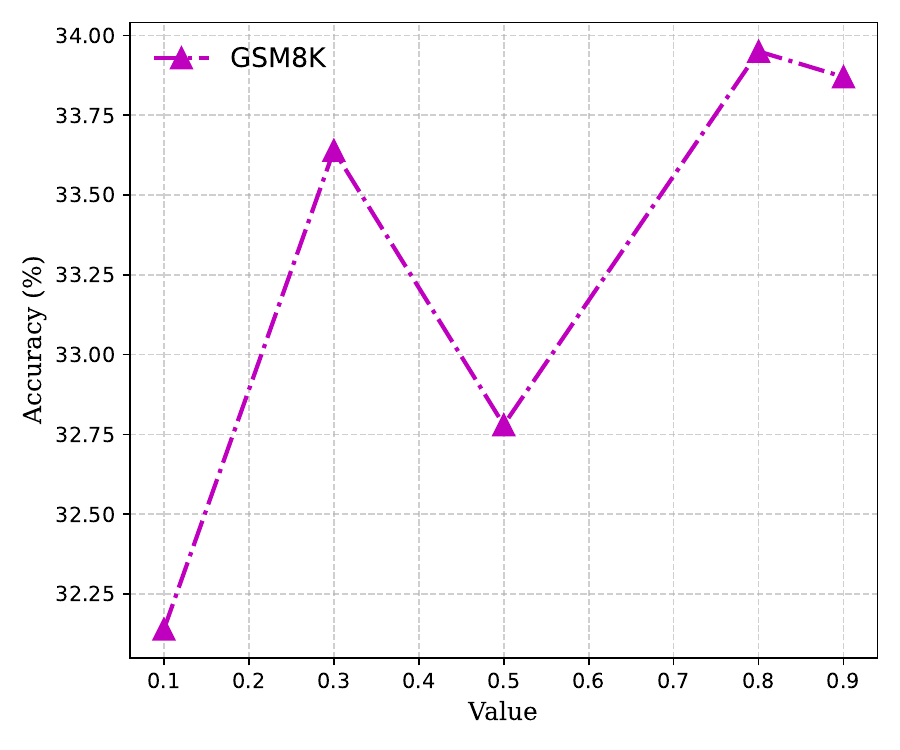}
  \vspace{-6mm}
  \caption{$\beta_2$ coefficient}
  \label{fig:GSM8K_beta2}
\end{subfigure}
  \vspace{-3mm}
  \caption{The impact of different hyperparameters $M, N, \beta_1, \beta_2$ on performance when fine-tuning the Qwen2.5-0.5B model on the GSM8K dataset. Please zoom in 300\% for better clarity.}
  \label{fig:hyperparameters_sensitivity_gsm8k}
\end{figure}

\begin{figure}[h]
  \centering
  \begin{subfigure}[b]{0.45\textwidth}
      \centering
      \includegraphics[width=\linewidth]{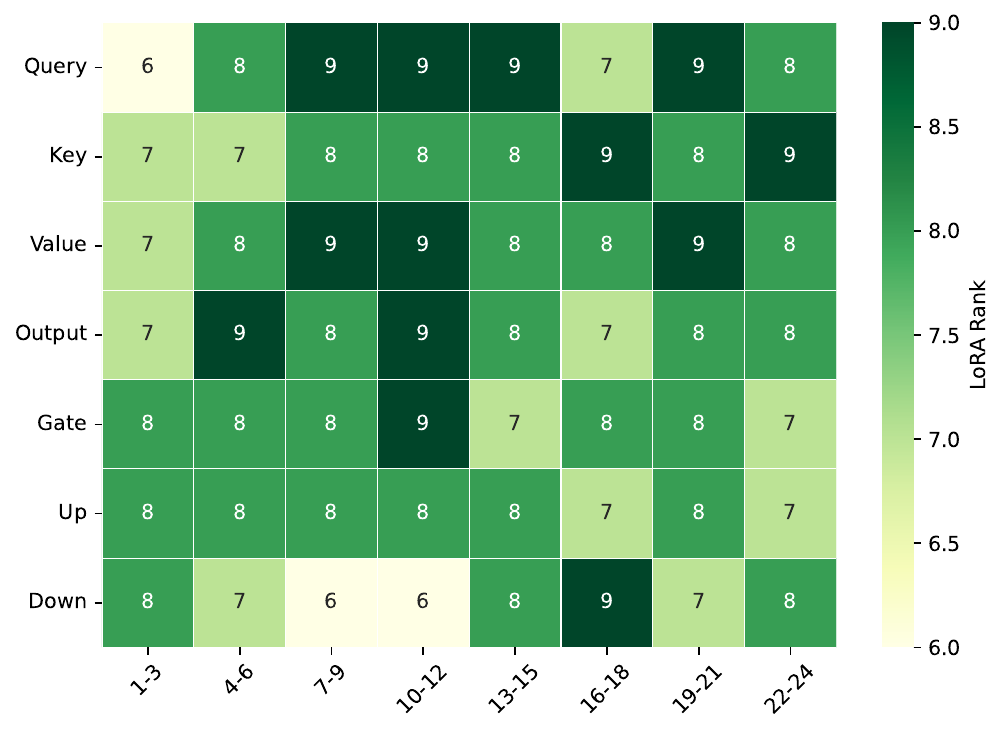}
      \vspace{-6mm}
      \caption{BoolQ task}
      \label{fig:rank_allocation_boolq}
  \end{subfigure}
  \hfill 
  \begin{subfigure}[b]{0.45\textwidth}
      \centering
      \includegraphics[width=\linewidth]{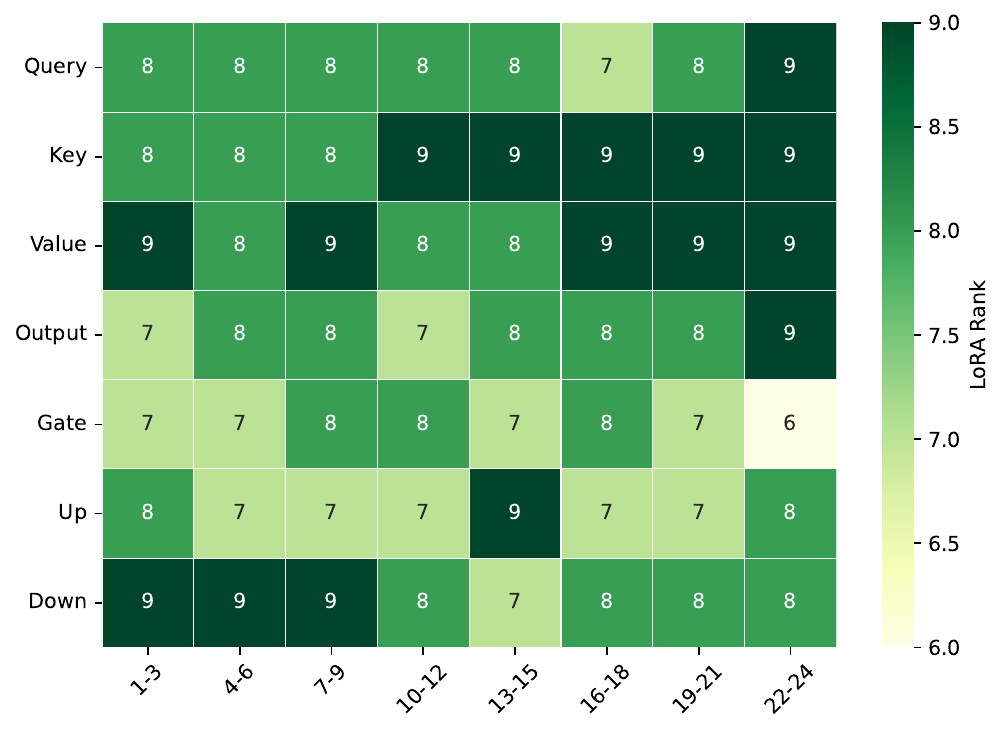}
      \vspace{-6mm}
      \caption{GSM8K task}
      \label{fig:rank_allocation_gsm8k}
  \end{subfigure}
  \vspace{-3mm}
  \caption{Rank allocation by \MyMethod\ on the Qwen-2.5-0.5B backbone after fine-tuning for the BoolQ and GSM8K tasks. Please zoom in 300\% for better clarity.}
  \label{fig:rank_allocation}
\end{figure}
\textbf{Visualization of Rank Allocation in \MyMethod.} Figure~\ref{fig:rank_allocation} visualizes the pruned LoRA rank allocation produced in \MyMethod. The rank distributions vary significantly across tasks, underscoring the need for task-specific adaptation to achieve optimal performance. Even within a single task, different Transformer layers allocate ranks differently, reflecting the fine-grained sensitivity of model components to low-rank updates. Despite this heterogeneity, consistent structural patterns emerge: in the self-attention mechanism, the Query and Key projections are most frequently prioritized for adaptation, while in the feed-forward network (FFN), the Up and Down projection layers receive the highest ranks. These observations reveal structural preferences in LoRA-based fine-tuning, offering valuable insights for designing generalized and efficient low-rank adaptation strategies.\par

\textbf{Comparisons on Rank Budgets.} In the main experiments, we fixed the initial rank budget at $r^0 = 32$ as a standard configuration. To further evaluate the robustness and adaptability of \MyMethod, we vary the initial rank budget across $\{2, 4, 8, 16, 32, 64 \}$ and compare its performance with AdaLoRA, LoRA, and DoRA on the BoolQ and GSM8K tasks. The results, shown in Figure~\ref{fig:different_budget}, demonstrate that \MyMethod\ consistently outperforms AdaLoRA, LoRA and DoRA under all budget settings. This is attributed to its ability to allocate LoRA dynamically across Transformer layers, which enables more effective adaptation.

\begin{figure}[h]
  \centering
  \begin{subfigure}[b]{0.45\textwidth}
      \centering
      \includegraphics[width=\linewidth]{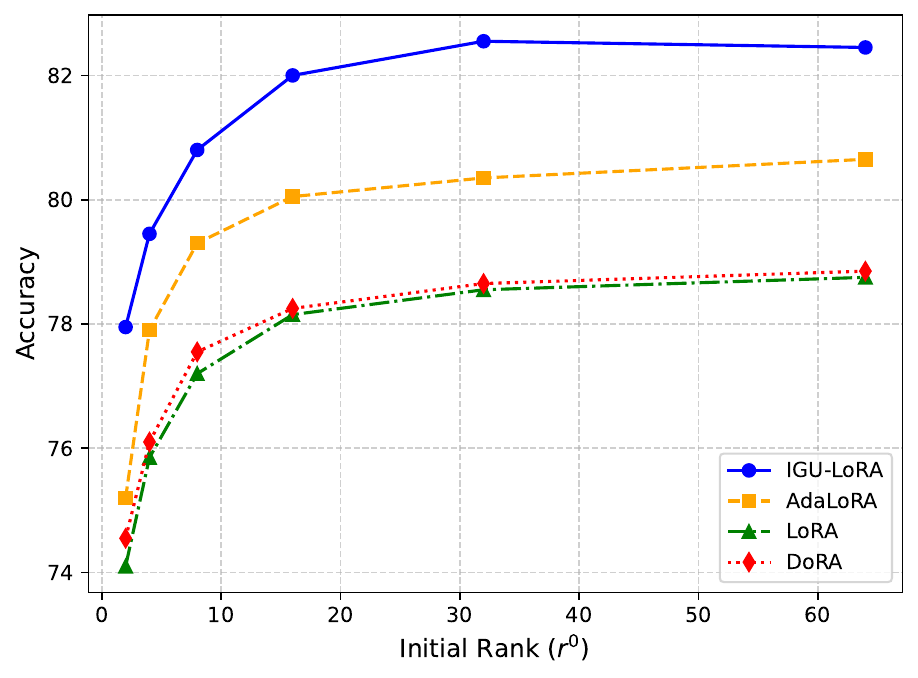}
      \vspace{-6mm}
      \caption{BoolQ task}
      \label{fig:rank_budget_boolq}
  \end{subfigure}
  \hfill 
  \begin{subfigure}[b]{0.45\textwidth}
      \centering
      \includegraphics[width=\linewidth]{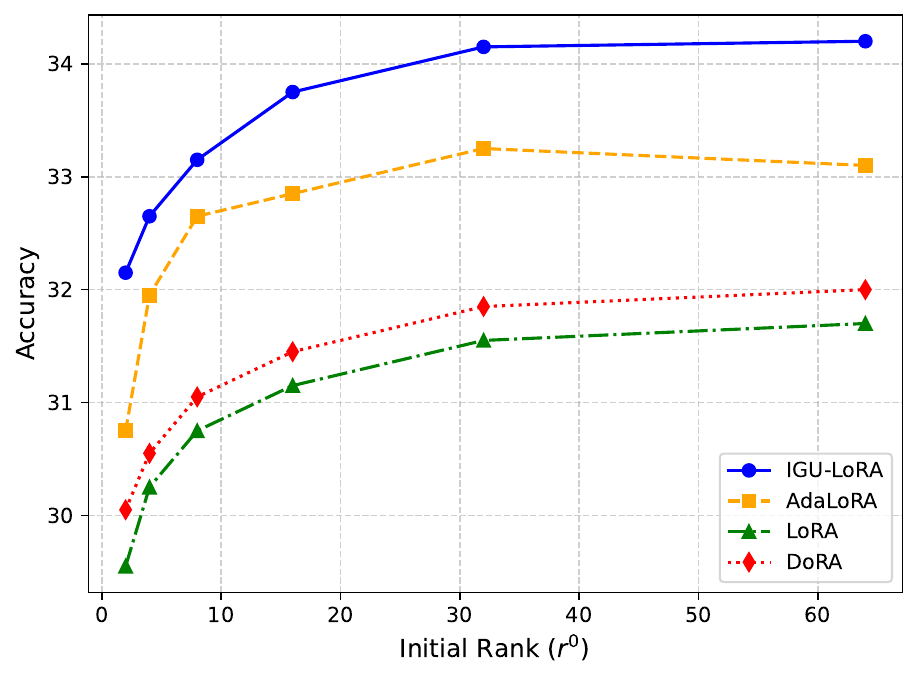}
      \vspace{-6mm}
      \caption{GSM8K task}
      \label{fig:rank_budget_gsm8k}
  \end{subfigure}
  \vspace{-3mm}
  \caption{Performances across different initial rank budgets. The $x$-axis denotes the initial rank $r^0$, while the $y$-axis indicates the corresponding task performance. Please zoom in 300\% for better clarity.}
  \label{fig:different_budget}
\end{figure}
\textbf{Comparisons on Different Backbone Models.} To demonstrate the broad applicability of our method, we now conduct experiments on Llama-2-7B, Llama-3-8B and DeepSeek-R1-Distill-Qwen-2.5-7B. The results are reported in Table~\ref{tab:results_different_backbones}. We can see that on these three backbones, {\MyMethod} can also outperform the baseline methods. 
\begin{table}[h]
  \centering
  \caption{PEFT methods comparison on different backbones. {Left:} GLUE accuracy (\%) with Llama-2-7B. {Right:} BoolQ and GSM8K accuracy (\%) with Llama-3-8B and DeepSeek-R1-Distill-Qwen-2.5-7B. Results are reported as the {median over 5 random seeds}. Bold and underline indicate the best and the second-best results.}
  \label{tab:results_different_backbones}
  \small
  \vspace{-10pt}

  \begin{minipage}[t]{0.55\linewidth}\centering
      \setlength{\tabcolsep}{0.35em}
      \renewcommand{\arraystretch}{0.9}
      \resizebox{\linewidth}{!}{%
      \begin{tabular}{l|c|cccccc}
      \toprule
      \multicolumn{8}{c}{\textbf{Llama-2-7B}} \\
      \midrule
      {Method} & {\# Params} & {SST-2} & {RTE} & {QNLI} & {BoolQ} & {COPA} & {Avg.} \\
      \midrule
      Full FT & 6738M & \textbf{95.83} & \textbf{92.11} & 92.54 & 87.30 & \textbf{93.01} & \textbf{92.16} \\
      \midrule
      Adapter     & 21.2M & 94.15 & 82.12 & 93.10 & 87.03 & 91.10 & 89.50 \\
      P-tuning v2 & 20.9M & 93.42 & 79.62 & 92.64 & 84.73 & 90.30 & 88.14 \\
      SSP         & 40.0M & 94.14 & 83.11 & 93.10 & 87.11 & 91.65 & 89.82 \\
      \midrule
      LoRA    & 20.0M & 94.12 & 83.37 & 93.10 & \underline{87.34} & 91.33 & 89.85 \\
      AdaLoRA & 20.0M & 94.12 & 83.51 & \underline{93.20} & 87.11 & 91.62 & 89.91 \\
      DoRA    & 40.0M & 94.24 & 84.12 & 91.23 & 85.51 & 90.01 & 89.02 \\
      \midrule
      \MyMethod\ & 40.0M & \underline{94.34} & \underline{84.33} & \textbf{93.33} & \textbf{88.11} & \underline{92.10} & \underline{90.44} \\
      \bottomrule
      \end{tabular}}
  \end{minipage}
  \hfill
  \begin{minipage}[t]{0.43\linewidth}\centering
    \renewcommand{\arraystretch}{0.5}
    \resizebox{0.8\linewidth}{!}{%
    \begin{tabular}{lccc}
      \toprule
      \multicolumn{4}{c}{\textbf{Llama-3-8B}} \\
      \midrule
      Method & BoolQ & GSM8K & Avg. \\
      \midrule
      LoRA     & 88.48 & 73.54 & 81.01\\
      AdaLoRA  & \underline{91.65} & \underline{75.82} & \underline{83.74} \\
      DoRA     & 88.07 & 74.75 & 81.41 \\
      {\MyMethod}  & \textbf{93.33} & \textbf{77.63} & \textbf{85.48}\\
      \midrule
      \multicolumn{4}{c}{\textbf{DeepSeek-R1-Distill-Qwen-2.5-7B}} \\
      \midrule
      Method & BoolQ & GSM8K & Avg.\\
      \midrule
      LoRA     & 88.38 & \textbf{74.60} & 81.49 \\
      AdaLoRA  & \underline{90.54} & 73.30 & \underline{81.92}\\
      DoRA     & 88.48 & 69.52 & 79.00\\
      {\MyMethod}  & \textbf{92.82} & \underline{74.28} & \textbf{83.55}\\
      \bottomrule
    \end{tabular}}
  \end{minipage}

\end{table}

\vspace{-4mm}
\section{Conclusion}
\vspace{-3mm}
In this work, we address the challenge of parameter importance estimation for efficient fine-tuning of LLMs. We propose \MyMethod, a robust scoring framework that integrates the concept of integrated gradients with an uncertainty-aware quantification mechanism. Unlike prior methods that rely solely on instantaneous gradient signals, \MyMethod\ captures each parameter's global and long-term contribution to model performance. Experimental results across diverse tasks and model architectures demonstrate that \MyMethod\ consistently outperforms state-of-the-art baselines, validating its effectiveness and generality. Nevertheless, the method incurs non-trivial computational overhead in network models in networks with large parameter counts, and its performance can be influenced by the choice of integration paths and the precision of uncertainty estimation. In future work, we plan to extend \MyMethod\ to larger-scale models and cross-modal tasks to further explore its adaptability and generalization across architectures.

\section{Ethics Statement}
This paper proposes an efficient fine-tuning framework, \MyMethod, that adaptively allocates LoRA ranks to alleviate the inaccuracy of gradient-sensitivity-based parameter importance estimation under gradient saturation, thereby enhancing the adaptability of large language models (LLMs) across diverse task domains. This study strictly adheres to ethical guidelines: no human subjects or sensitive data were involved. All experimental data are publicly available fine-tuning datasets, and no scenarios containing harmful content were used. While \MyMethod\ effectively improves the overall performance of LLMs, the models may still produce erroneous outputs or misjudgments; thus, we do not recommend deploying them in high-risk scenarios without thorough validation. We further declare that this work has no conflicts of interest, and all experiments and data processing comply with relevant ethical standards.
\section{Reproducibility Statement}
For clarity and reproducibility, we summarize the critical details of our method in the main text and Appendix as follows.
\begin{itemize}
    \item \textbf{Algorithmic Details:} We provide a detailed description of the \MyMethod\ algorithm in Section \ref{sec:method}, including the integrated gradients computation (Section \ref{sec:IG}) and uncertainty-aware scoring mechanism (Section \ref{sec:uncertainty}). Pseudocode is provided in Algorithm \ref{alg_1}.
    \item \textbf{Theoretical Analysis:} We present a theoretical analysis of the approximation error for parameter-space integrated gradients Section \ref{sec:IG}, Appendix~\ref{sec:proof_of_thm1} and Appendix~\ref{sec:proof_of_thm2}, including all necessary assumptions and proofs.
    \item \textbf{Experimental Setup:} We detail the experimental setup in Section~\ref{subsec:experimental_settings} and Appendix \ref{sec:configurations_detail}. 
    \item \textbf{Code Availability:} We adopt the code proposed by \cite{zheng2024llamafactory} for model training, which is publicly available at \url{https://github.com/hiyouga/LLaMA-Factory}. In addition, if this work is accepted, we commit to releasing the source code of our method.
\end{itemize}

\section{Acknowledgements}
This work was supported by the Natural Science Foundation of ChongqingJoint Fund for Innovation and Development (Grant No. CSTB2025NSCQ-LZX0134); the Science and Research Development Center of the Ministryof Education (Grant No. 2023ZY024); the Graduate Innovation Project ofChongqing Technology and Business University (Grant No. yjscxx2025-269-181); the Chongqing Technology and Business University ResearchFunds (413/1956019, 413/1952037 and 1752004). 

\bibliography{main}
\bibliographystyle{iclr2026_conference}

\newpage 
\appendix 
\section{{T}heoretical Proofs}
\subsection{proof of {T}heorem~\ref{thm:trap-se}}
\label{sec:proof_of_thm1}
\begin{proof}
  Fix $w_{ij}$ and set
  \(
    f(\alpha) \equiv f_{ij}(\alpha)
    = g_{ij}(\alpha)
    = \partial \mathcal L(\alpha \Delta \mathbf W)/\partial w_{ij}
  \).
  By Eq.~(\ref{IG_Label}),
  \(
    s_e(w_{ij})=|w_{ij}|\,\big|\int_0^1 f(\alpha)\,d\alpha\big|
  \).
  Define the composite trapezoidal approximation and its sampled variant:
  \begin{equation}
    \mathcal{T}_N
    = \frac{1}{2N}\!\left[f(0)+2\sum_{k=1}^{N-1} f\!\left(\tfrac{k}{N}\right)+f(1)\right],
    \qquad
    \widetilde{\mathcal{T}}_M
    = \frac{1}{2N}\!\left[f(0)+2(N{-}1)\,\overline f_M+f(1)\right],
  \end{equation}
  where
  \(
    \overline f_M=\frac{1}{M}\sum_{p=1}^M f(\alpha_p)
  \)
  with $\alpha_p$ i.i.d.\ drawn from the discrete uniform distribution on
  $\{1/N,\ldots,(N{-}1)/N\}$.

  Since $s_{agg}(w_{ij})=|w_{ij}|\,|\widetilde{\mathcal{T}}_M|$ and
  $||x|-|y||\le|x-y|$, the triangle inequality yields
  \begin{equation}\label{eq:decomp-proof-revised}
    \big|\,s_e(w_{ij})-s_{agg}(w_{ij})\,\big|
    \;\le\; |w_{ij}|\;\big|\textstyle\int_0^1 f - \widetilde{\mathcal{T}}_M\big|
    \;\le\; |w_{ij}|\Big(\big|\int_0^1 f - \mathcal{T}_N\big|
      + |\mathcal{T}_N-\widetilde{\mathcal{T}}_M|\Big).
  \end{equation}

  \textbf{Step 1: discretization error.}
  By assumption, $f$ is twice continuously differentiable on $[0,1]$ and
  $\sup_{\alpha\in[0,1]}|f''(\alpha)|\le C_2$. The standard error bound for the
  composite trapezoidal rule on $[0,1]$ (see, e.g., classical numerical
  analysis texts) yields
  \begin{equation}\label{eq:trap-error-revised}
    \big|\textstyle\int_0^1 f(\alpha)\,d\alpha - \mathcal{T}_N\big|
    \;\le\; \frac{C_2}{12\,N^2}.
  \end{equation}

  \textbf{Step 2: sampling error.}
  Let
  \(
    \mu=\frac{1}{N-1}\sum_{k=1}^{N-1} f(\tfrac{k}{N})
  \)
  denote the average of $f$ over the $(N{-}1)$ interior nodes. A simple algebraic
  manipulation gives
  \begin{equation}
    |\mathcal{T}_N-\widetilde{\mathcal{T}}_M|
    = \frac{1}{N}\Big|\sum_{k=1}^{N-1} f\!\left(\tfrac{k}{N}\right)
      - (N{-}1)\,\overline f_M\Big|
    = \frac{N-1}{N}\,|\mu-\overline f_M|
    \;\le\; |\mu-\overline f_M|.
  \end{equation}
  By assumption, $f(\alpha)$ is uniformly bounded on the discretization nodes, which is discussed in detail in Appendix~\ref{sec:sub-gaussian_discussion}:
  there exists $B<\infty$ such that
  $|f(\alpha)|\le B$ for all $\alpha\in\{1/N,\ldots,(N{-}1)/N\}$. Therefore,
  each sample $f(\alpha_p)$ lies in $[-B,B]$, and Hoeffding's inequality for
  bounded random variables implies that, for any $\delta\in(0,1)$,
  \begin{equation}\label{eq:hoeffding-revised}
    \Pr\!\left(|\mu-\overline f_M|\ge t\right)
    \;\le\; 2\exp\!\left(-\frac{2M t^2}{(2B)^2}\right)
    = 2\exp\!\left(-\frac{M t^2}{2B^2}\right).
  \end{equation}
  Setting the right-hand side equal to $\delta$ and solving for $t$ yields that,
  with probability at least $1-\delta$,
  \begin{equation}\label{eq:mu-bar-bound}
    |\mu-\overline f_M|
    \;\le\; B\sqrt{\frac{2\log(2/\delta)}{M}}
    \;\le\; c\,B\,\sqrt{\frac{\log(1/\delta)}{M}}
  \end{equation}
  for an absolute constant $c>0$. Combining with the previous display gives
  \begin{equation}\label{eq:Tdiff-bound}
    |\mathcal{T}_N-\widetilde{\mathcal{T}}_M|
    \;\le\; |\mu-\overline f_M|
    \;\le\; c\,B\,\sqrt{\frac{\log(1/\delta)}{M}}
  \end{equation}
  with probability at least $1-\delta$.

  \textbf{Step 3: combining the bounds.}
  Plugging Eq.~(\ref{eq:trap-error-revised}) and Eq.~(\ref{eq:Tdiff-bound}) into
  the decomposition in Eq.~(\ref{eq:decomp-proof-revised}) yields that, with
  probability at least $1-\delta$,
  \begin{equation}
    \big|\,s_e(w_{ij})-s_{agg}(w_{ij})\,\big|
    \;\le\; |w_{ij}|\Big(\frac{C_2}{12\,N^2}
      + c\,B\,\sqrt{\frac{\log(1/\delta)}{M}}\Big),
  \end{equation}
  which is exactly the claimed bound in Eq.~(\ref{eq:ig-epoch-bound-revised}).
\end{proof}

\subsection{High-probability stability of $\mathsf{SNR}_t$}
\label{sec:proof_of_thm2}
The resulting SNR-based score favors parameters with consistent, high-impact contributions and suppresses those with volatile or transient behavior. While the above formulation provides an intuitive interpretation of SNR, it remains essential to ensure its statistical stability with high probability, which is formally addressed in Theorem~\ref{thm:uas-core}.
\begin{theorem}
\label{thm:uas-core}
Let \( y_t = s_{{agg}}(w_{ij}) \) be the per-epoch raw importance defined in Eq.~(\ref{importanceScoreInEpoch}). Since \( \epsilon \) in Eq.~(\ref{FinallImportanceScoreInEpoch}) is a very small constant, it can be ignored. Therefore, we have:
\begin{equation}
      \mathsf{SNR}_t = \frac{\bar s_e^{(t)}}{\bar U^{(t)}+\epsilon} \approx \frac{\bar s_e^{(t)}}{\bar U^{(t)}},
    \end{equation}
    Assume that $(y_t)$ is an i.i.d.\ sequence of sub-Gaussian random variables
    with mean $\mu$ and variance $\sigma^2$, and let
    $d = \mathbb{E}\big[|y_t - \mu|\big] > 0$. For $\beta_1,\beta_2\in(0,1)$,
    define the effective EMA window lengths
    \begin{equation}
      n_{\mathrm{eff}}(\beta_1) = \frac{1+\beta_1}{1-\beta_1},
      \quad
      n_{\mathrm{eff}}(\beta_2) = \frac{1+\beta_2}{1-\beta_2},
      \quad
      n_{\mathrm{eff}} = \min\{n_{\mathrm{eff}}(\beta_1), n_{\mathrm{eff}}(\beta_2)\}.
    \end{equation}
    Then there exist universal constants $c_1,c_2,c_0>0$ such that, for any
    $\delta\in(0,1)$ and all
    \begin{equation}
      t \;\ge\; t_{\mathrm{burn}}
      \;=\;
      \left\lceil
        \frac{c_1}{1-\min\{\beta_1,\beta_2\}}
        \,\log\!\frac{c_2}{\delta}
      \right\rceil,
    \end{equation}
    the following holds with probability at least $1-\delta$:
    \begin{equation}
      \big|\mathsf{SNR}_t - \mu/d\big|
      \;\le\;
      C\,\sqrt{\frac{\log(2/\delta)}{\,n_{\mathrm{eff}}\,}},
      \qquad
      C \;=\;
      \frac{2\sqrt{2}\,\sigma}{d}
      + 2c_0\,\frac{\mu}{d^{2}}\,(\sigma + d).
    \end{equation}
\end{theorem}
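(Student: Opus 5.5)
We plan to prove Theorem~\ref{thm:uas-core} by a perturbation argument: we show that the numerator concentrates around $\mu$, that the denominator concentrates around $d$, and then we control the ratio by a first-order expansion.

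\textbf{Step 1: the numerator.} Unrolling Eq.~(\ref{smoothing}) gives
\[
\bar s_e^{(t)}=\beta_1^t\bar s_e^{(0)}+(1-\beta_1)\sum_{k=1}^t\beta_1^{t-k}y_k .
\]
The weights $a_k=(1-\beta_1)\beta_1^{t-k}$ satisfy
\[
\sum_k a_k=1-\beta_1^t,\qquad \sum_k a_k^2\le\frac{1-\beta_1}{1+\beta_1}=\frac{1}{n_{\mathrm{eff}}(\beta_1)}.
\]
The initialization bias is $\beta_1^t|\bar s_e^{(0)}-\mu|$. Once $t\ge t_{\mathrm{burn}}$ with suitable $c_1,c_2$, this bias is at most $\sigma\sqrt{\log(2/\delta)/n_{\mathrm{eff}}}$, because $\beta^t\le e^{-(1-\beta)t}$. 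Hoeffding's inequality for weighted sums of independent sub-Gaussian variables then gives, with probability at least $1-\delta/2$,
\[
|\bar s_e^{(t)}-\mu|\le\sqrt2\,\sigma\sqrt{\log(4/\delta)/n_{\mathrm{eff}}}.
\]
Here we identify the sub-Gaussian proxy with $\sigma$, absorbing any constant into the universal constants. This term produces the $2\sqrt2\sigma/d$ part of $C$ once it is divided by $\bar U^{(t)}\ge d/2$.

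\textbf{Step 2: the denominator.} Write
\[
|y_k-\bar s_e^{(k)}|=|y_k-\mu|+r_k,\qquad |r_k|\le|\bar s_e^{(k)}-\mu| .
\]
Unrolling Eq.~(\ref{uncertainty}) expresses $\bar U^{(t)}$ as an EMA with weights $b_k=(1-\beta_2)\beta_2^{t-k}$.
\begin{itemize}
\item The main part is $\sum_k b_k|y_k-\mu|$, an EMA of i.i.d.\ sub-Gaussian variables with mean $d$. Since $|y_k-\mu|$ is sub-Gaussian with proxy of order $\sigma$, the same weighted Hoeffding argument bounds its deviation from $d$ by $c\,\sigma\sqrt{\log(1/\delta)/n_{\mathrm{eff}}}$.
\item The remainder $\sum_k b_k|r_k|$ is bounded by a weighted average of the numerator deviations $|\bar s_e^{(k)}-\mu|$. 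We would bound it in expectation by $\sigma/\sqrt{n_{\mathrm{eff}}(\beta_1)}$ plus the bias terms, and then upgrade this to high probability. The upgrade uses that $\sum_k b_k(\bar s_e^{(k)}-\mu)$ is itself linear in the $y$'s, hence sub-Gaussian; alternatively, a union bound over $k$ in the window with a $\log$ factor absorbed into $c_0$ would also work.
\end{itemize}
The outcome is
\[
|\bar U^{(t)}-d|\le c_0(\sigma+d)\sqrt{\log(2/\delta)/n_{\mathrm{eff}}}.
\]
The factor $d$ accounts for the burn-in bias of $\bar U$ coming from its initialization.

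\textbf{Step 3: the ratio.} On the intersection of the two events, which has probability at least $1-\delta$, we use the identity
\[
\frac{\bar s}{\bar U}-\frac{\mu}{d}=\frac{\bar s-\mu}{\bar U}-\frac{\mu(\bar U-d)}{d\,\bar U}.
\]
When $\sqrt{\log(2/\delta)/n_{\mathrm{eff}}}$ is small, the Step 2 bound forces $\bar U\ge d/2$. Substituting the two deviation bounds then yields
\[
\frac{2}{d}\cdot\sqrt2\,\sigma\sqrt{\cdot}+\frac{2\mu}{d^2}\,c_0(\sigma+d)\sqrt{\cdot},
\]
which is exactly the stated $C$. In the opposite regime, where $\sqrt{\log(2/\delta)/n_{\mathrm{eff}}}$ is not small, we either enlarge the constants or note that the bound is vacuous.

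\textbf{Main obstacle.} The expected main obstacle is Step 2. The terms $|y_k-\bar s_e^{(k)}|$ are neither independent nor centered at $d$, because of the coupling with the running mean. Turning the remainder into a clean $\sqrt{\log(1/\delta)/n_{\mathrm{eff}}}$ bound without an extra $\log t$ factor is the delicate part. We would first try to dominate the remainder by a single linear functional of the $y$'s and apply sub-Gaussian concentration to it directly.
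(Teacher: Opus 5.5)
Your outline matches the paper's: weighted sub-Gaussian concentration for $\bar s_e^{(t)}$; the reverse-triangle split $\big|\,|y_k-\bar s_e^{(k)}|-|y_k-\mu|\,\big|\le|\bar s_e^{(k)}-\mu|$ for $\bar U^{(t)}$; then the ratio identity on $\{\bar U^{(t)}\ge d/2\}$. Treating $|y_k-\mu|-d$ as sub-Gaussian rather than sub-exponential is fine.

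The genuine gap is Step 3 outside the regime $c_0(\sigma+d)\sqrt{\log(2/\delta)/n_{\mathrm{eff}}}\le d/2$. There the bound is not vacuous, since $\mathsf{SNR}_t$ blows up as $\bar U^{(t)}\to0$. Enlarging $c_0$ does not help either, since the right-hand side never exceeds $C\sqrt{\log(2/\delta)}$. In fact the statement is false in that regime. Eq.~(\ref{smoothing}) gives $y_t-\bar s_e^{(t)}=\beta_1\big(y_t-\bar s_e^{(t-1)}\big)$. So for $y_t$ uniform on $[1,3]$, with $\bar s_e^{(0)}\in[1,3]$ and $\bar U^{(0)}=0$, one has $\bar s_e^{(t)}\ge1$ and $\bar U^{(t)}\le2\beta_1$ surely. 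Hence $\mathsf{SNR}_t\ge1/(2\beta_1)$, which for small $\beta_1$ exceeds $\mu/d+C\sqrt{\log(2/\delta)}$ on every outcome. For fixed $\beta_1,\beta_2$ the statement also fails as $\delta\to0$, because $\Pr(\bar U^{(t)}<\eta)$ decays only like $e^{-c\log^2(1/\eta)}$.

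The paper's proof has the same hole. It obtains $\bar U^{(t)}\ge d/2$ ``by increasing $c_1$'', but $c_1$ only lengthens the burn-in, and the bound on $|\bar U^{(t)}-d|$ involves neither $t$ nor $c_1$. What is missing is an extra hypothesis tying $n_{\mathrm{eff}}$ to $\delta$, such as $n_{\mathrm{eff}}\ge c\,(1+\sigma/d)^2\log(2/\delta)$. Under it, your Step 3 goes through.

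Two further steps fail as written.

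First, the burn-in. $t\ge t_{\mathrm{burn}}$ does not give $\beta_1^t|\bar s_e^{(0)}-\mu|\le\sigma\sqrt{\log(2/\delta)/n_{\mathrm{eff}}}$ with universal $c_1,c_2$, nor the analogue for $\bar U^{(0)}$. The burn-in is set by $1-\min\{\beta_1,\beta_2\}$, i.e.\ by the faster EMA, so $\beta_1^{t_{\mathrm{burn}}}$ need not be small when $\beta_1>\beta_2$. Even for $\beta_1=\beta_2=\beta$ you only get $\beta^{t}\le(\delta/c_2)^{c_1}$, which does not beat $\sigma\sqrt{\log(2/\delta)/n_{\mathrm{eff}}}/|\bar s_e^{(0)}-\mu|$ as $\beta\to1$. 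The paper sidesteps this only by unrolling both recursions over an infinite past (a stationary start), and you need that assumption too.

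Second, the Step 2 remainder $F=\sum_k b_k|\bar s_e^{(k)}-\mu|$ is not dominated by the signed linear functional $\sum_k b_k(\bar s_e^{(k)}-\mu)$. The union-bound route leaves a factor $\log(L/\delta)$ with $L\asymp\log(c_2/\delta)/(1-\beta_2)$. Its $\log\frac{1}{1-\beta_2}$ part cannot be absorbed into a universal $c_0$; the paper makes exactly this move. A clean fix:
\begin{itemize}
\item $\mathbb{E}F\le\sigma\,n_{\mathrm{eff}}(\beta_1)^{-1/2}$.
\item $F$ is coordinatewise Lipschitz in the $y_j$ with coefficient vector $A^{\top}b$, where $A$ is the EMA convolution matrix and $\|A\|_{2\to2}\le1$.
\item Hence $\|A^{\top}b\|_2\le\|b\|_2\le n_{\mathrm{eff}}(\beta_2)^{-1/2}$.
\item A Doob-martingale/Azuma argument with conditionally sub-Gaussian increments then gives $F\le\mathbb{E}F+O\big(\sigma\sqrt{\log(1/\delta)/n_{\mathrm{eff}}}\big)$, with no $\log t$ or $\log L$ loss.
\end{itemize}
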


\begin{proof}
  We analyze the EMA under the stylized assumption stated in
  Theorem~\ref{thm:uas-core}: $(y_t)$ is an i.i.d.\ sub-Gaussian
  sequence with mean $\mu$, variance proxy $\sigma^2$, and
  $d = \mathbb{E}|y_t-\mu|>0$.

  Recall that Eq.~(\ref{smoothing}) and Eq.~(\ref{uncertainty}) define the EMAs
  \begin{equation}
    \bar s_e^{(t)}
    = \beta_1 \bar s_{t-1} + (1-\beta_1) y_t,
    \qquad
    \bar U^{(t)}
    = \beta_2 \bar U_{t-1} + (1-\beta_2)\,\big|y_t - \bar s_e^{(t)}\big|.
  \end{equation}
  Unrolling the recursions (for $t$ large enough so that transients are
  negligible) shows that
  \begin{equation}
    \bar s_e^{(t)}
    = \sum_{k\ge 0} w^{(1)}_k\,y_{t-k},
    \quad
    w^{(1)}_k = (1-\beta_1)\beta_1^k,
    \qquad
    \bar U^{(t)}
    = (1-\beta_2)\sum_{k\ge 0} \beta_2^k\,\big|y_{t-k} - \bar s_{t-k}\big|.
  \end{equation}
  Note that $(w^{(1)}_k)_{k\ge 0}$ is a geometric weight sequence with
  $\sum_k w^{(1)}_k = 1$ and
  \begin{equation}
    \|w^{(1)}\|_2^2
    = \sum_{k\ge 0} (1-\beta_1)^2 \beta_1^{2k}
    = \frac{1-\beta_1}{1+\beta_1}
    = \frac{1}{n_{\mathrm{eff}}(\beta_1)}.
  \end{equation}
  Below we write $n_{\mathrm{eff}} = \min\{n_{\mathrm{eff}}(\beta_1),
  n_{\mathrm{eff}}(\beta_2)\}$.

  \paragraph{Step 1: concentration of $\bar s_e^{(t)}$.}
  Since $(y_t)$ are i.i.d.\ sub-Gaussian with mean $\mu$ and variance proxy
  $\sigma^2$, any fixed weighted sum $\sum_k w^{(1)}_k y_{t-k}$ is also
  sub-Gaussian with mean $\mu$ and variance proxy
  $\sigma^2\|w^{(1)}\|_2^2 = \sigma^2/n_{\mathrm{eff}}(\beta_1)$. Standard
  sub-Gaussian tail bounds then yield
  \begin{equation}
    \label{eq:sbars_conc_revised}
    \Pr\!\left(|\bar s_e^{(t)}-\mu|\ge \varepsilon\right)
    \;\le\;
    2\exp\!\left(-\frac{c\,n_{\mathrm{eff}}(\beta_1)\,\varepsilon^2}{\sigma^2}\right)
  \end{equation}
  for an absolute constant $c>0$. Setting the right-hand side to $\delta/2$ and
  solving for $\varepsilon$ gives
  \begin{equation}
    \label{eq:sbars_bound_final}
    |\bar s_e^{(t)} - \mu|
    \;\le\;
    \sigma\sqrt{\frac{2\log(4/\delta)}{n_{\mathrm{eff}}(\beta_1)}}
    \;\le\;
    \sqrt{2}\,\sigma\sqrt{\frac{\log(4/\delta)}{n_{\mathrm{eff}}}}
  \end{equation}
  with probability at least $1-\delta/2$.

  \paragraph{Step 2: concentration of $\bar U^{(t)}$.}
  We decompose $\bar U^{(t)}$ around $d = \mathbb{E}|y_t-\mu|$ as
  \begin{equation}
    \label{eq:U_decomp_revised}
    |\bar U^{(t)} - d|
    \;\le\;
    (1-\beta_2)\Big|\sum_{k\ge 0} \beta_2^k
      \big(|y_{t-k}-\mu| - d\big)\Big|
    \;+\;
    (1-\beta_2)\sum_{k\ge 0} \beta_2^k
      \big||y_{t-k}-\bar s_{t-k}| - |y_{t-k}-\mu|\big|.
  \end{equation}
  Define $X_t = |y_t-\mu| - d$, which is a centered, sub-exponential random
  variable whose tail parameters depend only on $(\sigma,d)$ (because $y_t$ is
  sub-Gaussian). Let $w^{(2)}_k = (1-\beta_2)\beta_2^k$ denote the EMA weights
  for $\bar U^{(t)}$. Then $\sum_{k\ge 0} w^{(2)}_k = 1$ and
  \[
    \|w^{(2)}\|_2^2
    = \sum_{k\ge 0} (1-\beta_2)^2 \beta_2^{2k}
    = \frac{1-\beta_2}{1+\beta_2}
    = \frac{1}{n_{\mathrm{eff}}(\beta_2)}.
  \]
  Applying a Bernstein-type concentration for weighted sums of i.i.d.\ 
  sub-exponential variables (see, e.g., standard results on Orlicz norms) yields
  the existence of an absolute constant $c_0>0$ such that, for any
  $\delta\in(0,1)$,
  \begin{equation}
    \label{eq:U_center_revised}
    \Pr\!\left(
      \Big|(1-\beta_2)\sum_{k\ge 0}\beta_2^k X_{t-k}\Big|
      \ge
      c_0(\sigma + d)\sqrt{\frac{\log(4/\delta)}{n_{\mathrm{eff}}(\beta_2)}}
    \right)
    \;\le\; \frac{\delta}{2}.
  \end{equation}

  For the second term in Eq.~(\ref{eq:U_decomp_revised}), note that
  \(
    \big||a-c| - |a-b|\big| \le |b-c|
  \)
  for any $a,b,c\in\mathbb{R}$, so
  \[
    \big||y_{t-k}-\bar s_{t-k}| - |y_{t-k}-\mu|\big|
    \;\le\; |\bar s_{t-k} - \mu|.
  \]
  Thus
  \begin{equation}
    \label{eq:U_block_start}
    (1-\beta_2)\sum_{k\ge 0} \beta_2^k
      \big||y_{t-k}-\bar s_{t-k}| - |y_{t-k}-\mu|\big|
    \;\le\;
    (1-\beta_2)\sum_{k\ge 0} \beta_2^k\,|\bar s_{t-k} - \mu|.
  \end{equation}
  We now bound the right-hand side by splitting the sum into a recent window and
  its tail. Let
  \begin{equation}
    L
    = \left\lceil
        \frac{c_1}{1-\beta_2}\,\log\!\frac{c_2}{\delta}
      \right\rceil
    \end{equation}
  for absolute constants $c_1,c_2>0$ chosen large enough. For $t\ge L$, we have      
\begin{equation}
    (1-\beta_2)\sum_{k\ge 0} \beta_2^k\,|\bar s_{t-k}-\mu|
    \;\le\;
    (1-\beta_2)\sum_{k=0}^{L} \beta_2^k\,|\bar s_{t-k}-\mu|
    \;+\;
    (1-\beta_2)\sum_{k>L}\beta_2^k\,|\bar s_{t-k}-\mu|.
  \end{equation}
  For the tail sum, $(1-\beta_2)\sum_{k>L}\beta_2^k = \beta_2^{L+1}$ and, by
  choosing $c_1,c_2$ appropriately, we can ensure $\beta_2^{L+1}\le \delta/(8c_2)$.
  For the finite window $\{t,t-1,\ldots,t-L\}$, we apply
  Eq.~(\ref{eq:sbars_bound_final}) and a union bound over these $(L+1)$ indices
  to obtain, with probability at least $1-\delta/2$,   
\begin{equation}
    |\bar s_{t-k}-\mu|
    \;\le\; \sqrt{2}\,\sigma\sqrt{\frac{\log(4L/\delta)}{n_{\mathrm{eff}}(\beta_1)}}
    \quad\text{for all } 0\le k\le L. 
  \end{equation}
  Combining these bounds and using $n_{\mathrm{eff}}\le n_{\mathrm{eff}}(\beta_1)$
  yields
  \begin{equation}
    \label{eq:U_block_revised}
    (1-\beta_2)\sum_{k\ge 0} \beta_2^k\,|\bar s_{t-k}-\mu|
    \;\le\;
    \tilde c\,\sigma\sqrt{\frac{\log(2/\delta)}{n_{\mathrm{eff}}}}
  \end{equation}
  with probability at least $1-\delta/2$, for an absolute constant
  $\tilde c>0$.

  Putting Eq.~(\ref{eq:U_center_revised}) and Eq.~(\ref{eq:U_block_revised})
  back into Eq.~(\ref{eq:U_decomp_revised}) and recalling that
  $n_{\mathrm{eff}}\le n_{\mathrm{eff}}(\beta_2)$, we obtain that, for
  $t\ge t_{\mathrm{burn}}$ and with probability at least $1-\delta$,
  \begin{equation}
    \label{eq:U_minus_d_revised}
    |\bar U^{(t)} - d|
    \;\le\;
    C_2'(\sigma + d)\sqrt{\frac{\log(2/\delta)}{n_{\mathrm{eff}}}}
  \end{equation}
  for an absolute constant $C_2'>0$. By increasing $c_1$ if necessary, we may
  ensure that the right-hand side in Eq.~(\ref{eq:U_minus_d_revised}) is at most
  $d/2$, so that $\bar U^{(t)} \ge d/2$ holds on the same high-probability event.

  \paragraph{Step 3: bounding the ratio $\mathsf{SNR}_t$.} On the event $\{\bar U^{(t)} \ge d/2\}$ we can control the ratio
  $\mathsf{SNR}_t = \bar s_e^{(t)} / \bar U^{(t)}$ via the deterministic inequality
  \begin{equation}
    \label{eq:ratio_revised}
    \left|\frac{\bar s_e^{(t)}}{\bar U^{(t)}} - \frac{\mu}{d}\right|
    \;\le\;
    \frac{2}{d}\,|\bar s_e^{(t)} - \mu|
    \;+\;
    \frac{2\mu}{d^{2}}\,|\bar U^{(t)} - d|.
  \end{equation}
  Combining Eq.~(\ref{eq:sbars_bound_final}) and Eq.~(\ref{eq:U_minus_d_revised})
  with Eq.~(\ref{eq:ratio_revised}), and noting that
  $n_{\mathrm{eff}}\le n_{\mathrm{eff}}(\beta_1)$, gives
\begin{equation}
    \big|\mathsf{SNR}_t - \mu/d\big|
    \;\le\;
    \left(
      \frac{2\sqrt{2}\,\sigma}{d}
      + 2c_0\,\frac{\mu}{d^2}(\sigma + d)
    \right)
    \sqrt{\frac{\log(2/\delta)}{n_{\mathrm{eff}}}}
  \end{equation}
  with probability at least $1-\delta$, for a suitable absolute constant
  $c_0>0$. This is exactly the claimed bound in
  Theorem~\ref{thm:uas-core} after setting
  $C = \frac{2\sqrt{2}\,\sigma}{d} +
  2c_0\,\frac{\mu}{d^2}(\sigma + d)$ and
  $t_{\mathrm{burn}} = \big\lceil
    \frac{c_1}{1-\min\{\beta_1,\beta_2\}}\log\frac{c_2}{\delta}
  \big\rceil$.
\end{proof}

\section{{The Discussion of the Assumptions in Theorem}}
\label{sec:assumption_discussion}
\subsection{{The Analysis of the Assumption in Theorem~\ref{thm:trap-se}}}
\label{sec:sub-gaussian_discussion}
{In this section, we focus on how the assumption in Theorem~\ref{thm:trap-se}, that \( g_{ij} \) is twice continuously differentiable on the interval \( [0,1] \) with a bounded second derivative, leads to the conclusion that \( g_{ij}(\alpha) \) is bounded. First, consider the following form of $g_{ij}(\alpha)$:}
\begin{equation}
  g_{ij}(\alpha) = \frac{\partial \mathcal{L}(\alpha \Delta \mathbf{W})}{\partial w_{ij}}, \quad \alpha \in [0,1],
\end{equation}
{The analysis of Theorem~\ref{thm:trap-se} relies solely on the assumption that $g_{ij}$ is twice differentiable on the interval $[0,1]$ and that its second derivative is bounded, which allows the application of the composite trapezoidal rule, leading to a discretization error of $O(N^{-2})$. Specifically, numerical analysis typically assumes the existence of a constant $C_2 < \infty$ such that:}
\begin{equation}
  \sup_{\alpha \in [0,1]} \left| g_{ij}''(\alpha) \right| \leq C_2.
\end{equation}
{Under this assumption, we can derive the following error bound:}
\begin{equation}
  \left|\int_0^1 g_{ij}(\alpha)\,d\alpha - \mathcal{T}_N\right| \leq \frac{C_2}{12N^2},
\end{equation}
{This equation provides the theoretical basis for the $O(N^{-2})$ discretization error term in Theorem~\ref{thm:trap-se}. This requirement is essentially a standard smoothness assumption in trapezoidal integration and does not involve any specific distributional assumptions. Furthermore, the condition of bounded second derivatives directly implies that $g_{ij}$ itself is bounded. By the fundamental theorem of calculus:}
\begin{equation}
     g_{ij}'(\alpha) = g_{ij}'(0) + \int_0^\alpha g_{ij}''(t)\,dt, \
     g_{ij}(\alpha) = g_{ij}(0) + \int_0^\alpha g_{ij}'(t)\,dt, 
\end{equation}
{We can obtain the bound for all $\alpha \in [0,1]$:}
\begin{equation}
  |g_{ij}'(\alpha)| \leq |g_{ij}'(0)| + \int_0^1 |g_{ij}''(t)|\,dt \leq |g_{ij}'(0)| + C_2,
\end{equation}
Thus,
\begin{equation}
  |g_{ij}(\alpha)| \leq |g_{ij}(0)| + \int_0^1 |g_{ij}'(t)|\,dt \leq |g_{ij}(0)| + |g_{ij}'(0)| + C_2 \triangleq B.
\end{equation}
{This implies that $g_{ij}(\alpha)$ is bounded on $[0,1]$. When we sample $\alpha$ from the finite set $\{1/N, \dots, (N-1)/N\}$, the resulting random variable $g_{ij}(\alpha)$ is bounded by constant $B$.}

\subsection{{The Analysis of the i.i.d. Assumption in Theorem~\ref{thm:uas-core}}}

{Theorem~\ref{thm:uas-core} assumes that the per-epoch raw scores \( y_t = s_{agg}(w_{ij}) \) form an i.i.d. sub-Gaussian sequence with a common mean \( \mu \) and variance \( \sigma^2 \). However, strictly speaking, \( y_t \) depends on the current model parameters \( \mathbf{W}^{(t)} \), which are updated across epochs, so exact i.i.d. is an idealization.}

{Our goal is to model the regime in which the training dynamics have \emph{stabilized}: after an initial transient phase (discarded via the burn-in time \( t_{\mathrm{burn}} \)), the statistics of the gradient noise around the current solution change only slowly. Furthermore, within the effective EMA window \( n_{\mathrm{eff}}(\beta_1, \beta_2) \), the gradient sequence can be approximated as having nearly stationary mean and variance. In this regime, standard extensions of EMA concentration results to weakly dependent or mixing sequences apply. We chose the i.i.d. setting for clarity of presentation and to keep the notation simple. It is important to note that Theorem~\ref{thm:uas-core} is derived under this stylized, locally stationary noise assumption, and is meant to provide intuition about how the EMA window size and variance control the stability of \( \mathsf{SNR}_t \), rather than to capture every aspect of LLM training dynamics exactly.}

{To support this approximation empirically, we provide a small diagnostic in Appendix~\ref{sec:theorem_verification}: for a representative layer on BoolQ, we plot the time series of \( y_t \) and its running mean/variance across epochs. We observe that, after the early epochs, both the mean and variance of \( y_t \) quickly settle into a narrow band, and the lag-1 autocorrelation becomes small. Correspondingly, the \( \mathsf{SNR}_t \) curves are nearly flat after burn-in. These observations suggest that, in the regime where EMA-based importance is actually used for rank pruning, the i.i.d./local stationarity approximation is reasonably accurate.}

{Finally, we emphasize that these assumptions are used only in our theoretical analysis; the algorithm itself does not rely on them. Even when the exact assumptions are relaxed, the qualitative conclusions remain the same: (i) our IG estimator trades off discretization error \( O(N^{-2}) \) and sampling error \( O(M^{-1/2}) \), and (ii) EMA-based \( \mathsf{SNR}_t \) scores become more stable as the effective sample size increases and the process enters a locally stationary regime.}

\section{Hyperparameter Settings}\label{sec:configurations_detail}
During the training process, we tune the learning rate from $\{5 \times 10^{-4}, 1 \times 10^{-4}, 5 \times 10^{-4}, 1 \times 10^{-3}, 2 \times 10^{-4}\}$ and pick the best learning rate for every method. For the MNLI, QNLI, and QQP, we set the batch size to $128$. For R{T}E, MRPC, CoLA, and S{T}S-B, the batch size is set to $32$. For SS{T}-2, we use a batch size of $64$. For all other tasks, the batch size is set to $16$. All baseline methods follow the same settings as \MyMethod, as detailed in {T}able~\ref{tab:Details_config}. {In \MyMethod, several key hyperparameters $\epsilon, M, N, \beta_1, \beta_2$ are set to $1 \times 10^{-6}$, $16$, $20$, $0.85$, and $0.85$, respectively, as detailed in Table~\ref{tab:hp_settings}. They remain constant throughout the experiment, and their sensitivity is discussed in the main text.}
\begin{table*}[h]
    \centering
    \caption{Hyperparameter setup of \MyMethod\ for training on different datasets.}
    \vspace{-8pt}
    \resizebox{0.8\textwidth}{!}{
    \begin{tabular}{c|ccccccccc}
    \toprule
    Dataset &  learning rate & batch size & Max. Sequence Length  & \# epochs & $\gamma$ & $t_i$ & $\Delta_{T}$ & $t_f$ \\ 
    \midrule
    MNLI &    $5 \times 10^{-4}$   &   $128$   & $512$ &   $25$      &    $0.1$     &   $500$   &   $20$   &  $10000$  \\
    R{T}E &     $1 \times 10^{-3}$   &   $32$   & $512$  &    $25$     &   $0.1$   & $300$   &  $5$  &   $2500$   \\
    QNLI &  $5 \times 10^{-4}$   &   $128$  & $512$ &    $25$      &    $0.1$  &  $400$    &   $20$  &   $10000$  \\
    MRPC &  $1 \times 10^{-3}$   &  $32$  & $512$   &    $25$      &    $0.1$ &   $300$     &  $5$   &   $2500$   \\ 
    QQP  &   $5 \times 10^{-4}$      &   $128$  & $512$  &    $25$       &  $0.1$   &  $500$    &   $20$   &   $10000$  \\
    SS{T}-2 &  $1 \times 10^{-3}$   &   $64$ & $512$   &    $25$      &   $0.1$   &  $400$    &   $20$  &   $5000$  \\
    CoLA  &  $1 \times 10^{-3}$   &   $32$  & $512$   &   $25$      &  $0.1$  &   $300$    &  $5$    
     &   2500   \\
    S{T}S-B  &  $2 \times 10^{-3}$    &   $32$  & $512$  &   $25$     & $0.1$  &    $300$     &  $5$    &   $2500$ \\ 
    \midrule
    BoolQ &  $5 \times 10^{-4}$   &   $16$  & $512$  &  $25$   &  $0.1$   &  $500$   &   $20$    &  $10000$  \\
    ARC-e & $5 \times 10^{-4}$   &   $16$  & $512$  &  $25$   &  $0.1$   &  $500$   &   $20$    &  $10000$    \\
    ARC-c & $5 \times 10^{-4}$   &   $16$  & $512$  &  $25$   &  $0.1$   &  $500$   &   $20$    &  $10000$    \\
    COPA  &  $1 \times 10^{-3}$   &   $16$  & $512$  &  $25$   &  $0.1$   &  $500$   &   $20$    &  $10000$    \\
    \midrule
    AQuA   &   $1 \times 10^{-4}$    &   $16$  & $512$  &  $25$   &   $0.1$     &  $500$   &   $20$    &  $10000$    \\ \midrule
    {MMLU} & {$1 \times 10^{-4}$}     &   {$128$} & {$512$}   &  {$15$}   & {$0.1$}     &  {$500$}   &   {$20$}   &  {$10000$}    \\ \midrule
    {VQA} & {$2 \times 10^{-4}$}     &   {$32$} & {$512$}  &  {$25$}   & {$0.1$}     &  {$300$}   &   {$20$}   &  {$10000$}    \\ 
    {GAQ} & {$5 \times 10^{-4}$}     &   {$32$} & {$512$}  &  {$25$}   & {$0.1$}     &  {$300$}   &   {$20$}   &  {$10000$}    \\ 
    {MVLR$^2$} & {$5 \times 10^{-4}$}     &   {$32$} & {$512$}  &  {$25$}   & {$0.1$}     &  {$300$}   &   {$20$}   &  {$10000$}    \\ 
    {COCO} & {$2 \times 10^{-4}$}     &   {$32$} & {$512$}  &  {$25$}   & {$0.1$}     &  {$300$}   &   {$20$}   &  {$10000$}    \\ 
    \bottomrule
    \end{tabular}
    }
    \label{tab:Details_config}
\end{table*}

\begin{table*}[h]
  \centering
  \caption{{Setting of the 5 hyperparameters ($\epsilon, M, N, \beta_1, \beta_2$) in \MyMethod.}}
  \vspace{-8pt}
  \setlength{\tabcolsep}{1.5em}
  \renewcommand{\arraystretch}{1.2}
  \resizebox{0.7\textwidth}{!}{
  \begin{tabular}{c|ccccc}
  \toprule
  {Hyperparameter} & {$\epsilon$} & {$M$} & {$N$} & {$\beta_1$} & {$\beta_2$} \\ \midrule
  {Value} &  {$1 \times 10^{-6} $} & {$16$} & {$20$} & {$0.85$} & {$0.85$} \\
  \bottomrule
  \end{tabular}
  }
  \label{tab:hp_settings}
\end{table*}

\section{{Ablation Study on High-Impact Parameters}}
\label{sec:ablation_study}
{To further validate the effectiveness of \MyMethod\ in identifying high-impact parameters, we conduct an ablation study on high-impact parameters. Specifically, we remove the high-rank and low-rank modules with the highest \MyMethod\ scores from different layers of the Qwen2.5-0.5B model and evaluate the performance drop on the Boolq and GSM8K datasets. As shown in Table~\ref{tab:ablation_study}, removing the high-rank modules from the K module in Layer 3 (L3\_K) and the V module in Layer 10 (L10\_V) results in a performance drop of 1.30 and 1.33 points on Boolq, respectively. Similarly, removing the high-rank modules from the Q module in Layer 22 (L22\_Q) and the K module in Layer 17 (L17\_K) results in performance drops of 1.80 and 1.73 points on GSM8K, respectively. In contrast, removing the low-rank modules from the K module in Layer 1 (L1\_K) and the V module in Layer 3 (L3\_V) results in only minor performance drops of 0.05 and 0.10 points on Boolq, respectively. The same trend is observed on GSM8K when removing the low-rank modules from the Q module in Layer 8 (L8\_Q) and the K module in Layer 6 (L6\_K), resulting in performance drops of 0.11 and 0.15 points, respectively. These results demonstrate that \MyMethod\ effectively identifies high-impact parameters, as their removal leads to significant performance degradation compared to low-impact parameters.
}
\begin{table*}[ht]
  \centering
  \caption{{Ablation study on the impact of removing high-rank and low-rank modules from different layers on Qwen2.5-0.5B model performance. The numbers in parentheses indicate the performance drop compared to the model with no modules removed. The left table and the right table represent results on Boolq and GSM8K, respectively.}}
  \vspace{-8pt}
  \begin{minipage}[t]{0.4\linewidth}\centering
    \setlength{\tabcolsep}{0.35em}
    \renewcommand{\arraystretch}{1}

  \resizebox{\textwidth}{!}{
    
  \begin{tabular}{c|c|c|c}
  \toprule
  \textbf{} & {{Module Removed}} & {Rank} & {Boolq} \\ \midrule
  {1} & {L3\_K} & {10} & {81.15 (-1.30)} \\ 
  {2} & {L10\_V} & {10} & {81.12 (-1.33)}  \\ 
  {3} & {L3\_K / L10\_V} & {10 / 10} & {80.44 (-2.01)}\\ \midrule
  {4} & {L1\_K} & {5} & \underline{{82.40 (-0.05)}}  \\ 
  {5} & {L3\_V} & {5} & {82.35 (-0.10)} \\ 
  {6} & {L1\_K / L3\_V} & {5 / 5} & {82.30 (-0.15)}  \\  \midrule
  {7} & - & - & {\textbf{82.45}}  \\
  \bottomrule
  \end{tabular}
  }
\end{minipage}
\begin{minipage}[t]{0.4\linewidth}\centering
  \setlength{\tabcolsep}{0.35em}
  \renewcommand{\arraystretch}{1}

\resizebox{\textwidth}{!}{
  
\begin{tabular}{c|c|c|c}
\toprule
\textbf{} & {Module Removed} & {Rank} & {GSM8K} \\ \midrule
{1} & {L22\_Q} & {12} & {32.35 (-1.80)} \\ 
{2} & {L17\_K} & {11} & {32.42 (-1.73)}  \\ 
{3} & {L22\_Q / L17\_K} & {12 / 11} & {31.15 (-3.00)}\\ \midrule
{4} & {L8\_Q} & {6} & {34.05 (-0.11)}  \\ 
{5} & {L6\_K} & {6} & {\textbf{34.01 (-0.15)}} \\ 
{6} & {L8\_Q / L6\_K} & {6 / 6} & {\underline{33.84 (-0.32)}}  \\  \midrule
{7} & {-} & {-} & {\textbf{34.16}}  \\
\bottomrule
\end{tabular}
}
\end{minipage}
  \label{tab:ablation_study}
  \end{table*}

\section{Generalization supplementary experiments}
\label{sec:generalization_appendix}
{To further validate the generalization performance of \MyMethod, we conduct additional experiments on the MMLU benchmark using the Llama2-7B model. As shown in Table~\ref{tab:mmlu_generalization}, \MyMethod\ achieves an average accuracy of $51.07\%$, which is very close to the full fine-tuning method ($51.54\%$) and outperforms LoRA ($49.94\%$). Notably, \MyMethod\ demonstrates superior performance in Science, Technology, Engineering, and Mathematics (STEM) and Social Science subjects, achieving accuracies of $41.71\%$ and $58.12\%$, respectively. These results further confirm the effectiveness of \MyMethod\ in enhancing the generalization capabilities of fine-tuned models across diverse subject areas.}
  \begin{table*}[ht]
    \centering
    \caption{{The generalization performance of fine-tuning the Llama2-7B model on the MMLU benchmark using different methods, reporting the average results over 5 random seeds.}}
     \vspace{-8pt}
     \begin{tabular}{l|cccc|c}
     \toprule
     \textbf{{Method}} & \textbf{{Humanities}} & \textbf{{STEM}} & \textbf{{Social.}} & \textbf{{Other}} & \textbf{{Avg.}} \\
     \midrule
     {Full FT} & \textbf{{49.91}} & {41.70} & {57.53} & {57.02} & \textbf{{51.54}} \\
     {LoRA} & {46.15} & {40.84} & {56.63} & {56.23} & {49.94} \\ \midrule
     {\MyMethod} & \underline{{47.33}} & \textbf{{41.71}} & \textbf{{58.12}} & \textbf{{57.10}} & \underline{{51.07}} \\
    \bottomrule
    \end{tabular}
    \label{tab:mmlu_generalization}
\end{table*}

\section{{Multimodal benchmark supplementary experiments}}
\label{sec:multimodal_appendix}
{To further demonstrate the effectiveness of \MyMethod\ in multimodal tasks, we conduct additional experiments on the VQAv2, GAQ, $\text{NVLR}^2$ and COCO Captioning datasets using the VL-BART~\citep{su2019vl}. As shown in Table~\ref{tab:multimodal_appendix}, \MyMethod\ achieves an average score of $77.47$, outperforming LoRA ($74.31$) and DoRA ($77.40$), and closely approaching the performance of full fine-tuning ($77.35$). These results further validate the capability of \MyMethod\ to effectively adapt multimodal models while maintaining high performance across different tasks.}
\begin{table*}[h]
  \centering
  \caption{{Performance comparison of different fine-tuning methods on the VQA, GAQ, $\text{NVLR}^2$ and COCO datasets using the VL-BART model. The results are averaged over 5 random seeds.}}
  \vspace{-8pt}
  \resizebox{0.7\textwidth}{!}{
  \begin{tabular}{l|cccc|c}
  \toprule 
  \textbf{{Method}} & \textbf{{VQAv2}} & \textbf{{GAQ}} & ${\textbf{NVLR}^2}$ & {\textbf{COCO Captioning}} & {Avg.} \\ \midrule
  {Full FT} & {66.91} & {56.72} & {73.71} & {112.04} & {77.35} \\ \midrule
  {LoRA} & {64.32} & {54.10} & {71.25} & {109.56} & {74.31} \\ 
  {DoRA} & {65.81} & {54.71} & {73.14} & {115.93} & {77.40} \\ \midrule
  {\MyMethod} & {65.78} & {55.32} & {73.42} & {115.36} & {\textbf{77.47}} \\
  \bottomrule
  \end{tabular}
  }
  \label{tab:multimodal_appendix}
  \end{table*}
\section{{The verification of the i.i.d./local stationarity approximation in Theorem~\ref{thm:uas-core}.}}
\label{sec:theorem_verification}
{To validate the i.i.d. / local stationarity approximation used in Theorem~\ref{thm:uas-core}, we conduct an empirical analysis of the importance score statistics during the fine-tuning process. Specifically, we monitor several representative modules (e.g., the L16\_Q module for the $16$-th layer's Q component and the L5\_K module for the $5$-th layer's K component) across multiple training iterations on the BoolQ dataset. We observe that, after the initial epochs, the mean and variance of $y_t$ quickly stabilize within a narrow range, and the first-order lag autocorrelation becomes very small. Correspondingly, the $\mathsf{SNR}_t$ curve becomes nearly flat after the burn-in period. These observations suggest that the i.i.d./local stationarity approximation is reasonable and accurate during the stage when EMA-based importance-ranking pruning is applied in practice.}\par
\begin{figure}[h]
  \centering
  \includegraphics[width=0.8\textwidth]{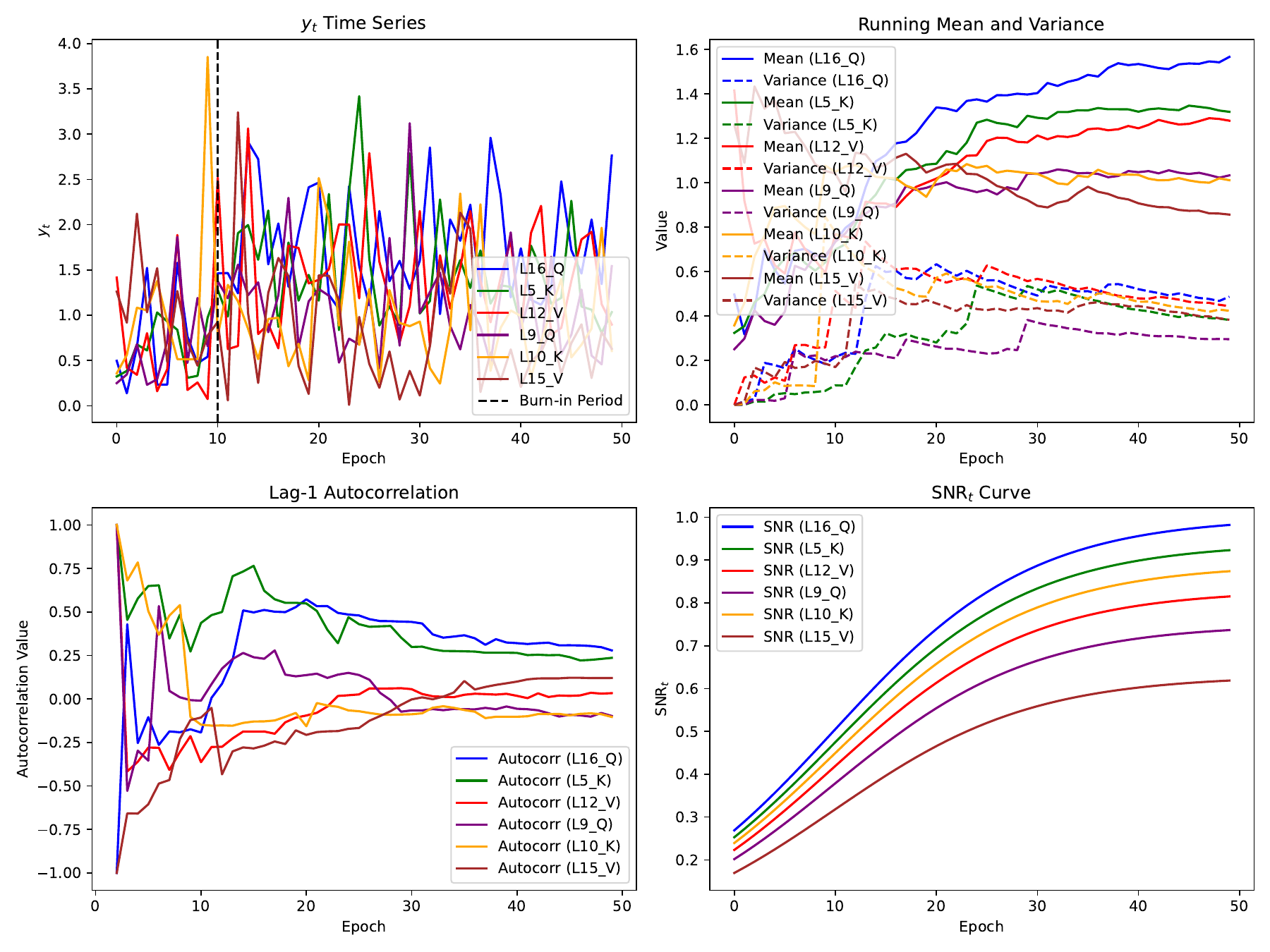}
  \vspace{-8pt}
  \caption{Empirical analysis of importance score statistics during fine-tuning. The plots show the changes in $y_t$, the mean and variance of $y_t$, the first-order lag autocorrelation, and $\mathsf{SNR}_t$ across training iterations for representative module parameters.}
  \label{fig:gradient_stats}
\end{figure}

\section{Effects of Sample Order and Batch Size}
\label{sec:sample_order_batch_size}
{To investigate the effects of sample order and batch size on the performance of \MyMethod, we conduct experiments using the Qwen-2.5-0.5B model on the BoolQ dataset. The results are summarized as follows:}\par
\textbf{{Sample Order / Random Seed.}} {we trained with a fixed batch size using five different random seeds. These seeds control the data shuffling and the sampled integration nodes \( \alpha_k \). The downstream accuracy varies slightly across seeds (within \( \Delta_{\mathrm{acc}} \) absolute points, indicating a small change), which demonstrates that the sample order has high stability on the results.}\par
\textbf{{Batch Size.}} {We further vary the batch size (e.g., 2, 4, 8, 16, 32) while keeping all other hyperparameters fixed. The resulting test accuracy again shows only minor variation. This proves that batch size does not have a significant impact on the results. The detailed results are presented in Table~\ref{tab:batch_size_effect}.}
\begin{table}[h]
  \centering
  \caption{{Effect of Batch Size on BoolQ Accuracy across Different Random Seeds}}
  \vspace{-8pt}
  \setlength{\tabcolsep}{0.7em}
  \renewcommand{\arraystretch}{1}
  \resizebox{0.7\textwidth}{!}{
  \begin{tabular}{c|c|c|c|c|c}
  \toprule
  \textbf{{Batch Size}} & \textbf{{Seed 1}} & \textbf{{Seed 2}} & \textbf{{Seed 3}} & \textbf{{Seed 4}} & \textbf{{Seed 5}} \\
  \midrule
  {2} & {82.46} & {82.47} & {82.45} & {82.46} & {82.45} \\
  {4} & {82.45} & {82.46} & {82.44} & {82.45} & {82.44} \\
  {8} & {82.44} & {82.45} & {82.43} & {82.44} & {82.43} \\
  {16} & {82.45} & {82.46} & {82.44} & {82.45} & {82.44} \\
  {32} & {82.40} & {82.41} & {82.39} & {82.40} & {82.39} \\
  \bottomrule
  \end{tabular}
  }
  \label{tab:batch_size_effect}
\end{table}

\section{Datasets and Metrics}
\label{sec:dataset_metrics}
\subsection{GLUE Benchmark {T}asks}
\textbf{Single-sentence Classification {T}asks.} (1) \textit{CoLA (Corpus of Linguistic Acceptability)}: Determine whether a sentence adheres to grammatical rules (binary classification). (2) \textit{SS{T}-2 (Stanford Sentiment {T}reebank)}: Movie review sentiment analysis (positive/negative binary classification).\par
\textbf{Sentence-pair Classification {T}asks.} (1) \textit{MRPC (Microsoft Research Paraphrase Corpus)}: Determine whether two sentences are semantically equivalent (binary classification). (2) \textit{QQP (Quora Question Pairs)}: Determine whether two Quora questions are semantically identical (binary classification). (3) \textit{R{T}E (Recognizing {T}extual Entailment)}: Determine whether a sentence pair entails a relationship (three-class classification: entailment/contradiction/neutral).\par
\textbf{Similarity and Regression {T}ask.} \textit{S{T}S-B (Semantic {T}extual Similarity Benchmark)}: Calculate the semantic similarity between two sentences (continuous value from 1 to 5).\par
\textbf{Question-answering {T}ask}. \textit{QNLI (Question-answering NLI).} Determine whether a sentence contains the answer to a given question (binary classification).\par
\textbf{Natural Language Inference {T}ask.} \textit{MNLI (Multi-Genre Natural Language Inference).} Large-scale cross-domain textual entailment classification (three-class classification).

\subsection{Mathematical and Common-sense Reasoning {T}asks}
\textbf{Mathematical Reasoning {T}asks.} (1) \textit{AQuA (Algebra question answering)}: Derive the correct answer from a given algebraic problem (multiple-choice) and generate the corresponding solution process (Rationales). (2) \textit{GSM8K (Grade school math 8K)}: Perform multi-step reasoning on mathematical problems described in natural language.\par

\textbf{Common-Sense Reasoning {T}asks.} (1) \textit{BoolQ (Boolean questions).} Determine whether the answer to a given question, based on the provided paragraph, is "Yes" ({T}rue) or "No" (False). (2) \textit{ARC-e (AI2 reasoning challenge - easy)}: Select the most reasonable answer from a given set of scientific questions (Multiple-choice question). (3) \textit{ARC-c (AI2 reasoning challenge - challenge)}: Combine multi-step reasoning and cross-domain knowledge to provide answers. (4) \textit{COPA (Choice of plausible alternatives).} Select the most plausible cause or effect for a given premise from two provided alternatives. {T}he task requires understanding of causal relationships and commonsense reasoning in everyday scenarios.

\subsection{{MultiModal Benchmark Tasks}}
\textbf{{Visual Question Answering {T}asks.}} (1) \textit{{VQAv2 (Visual Question Answering v2).}} {Given an image and a related question, select the most appropriate answer from multiple choices.} (2) \textit{{GAQ (Generalized Question Answering).}} {This task extends VQA to a more generalized setting, where the model is asked to answer a wider range of questions based on visual context.}\par
\textbf{{Visual-Linguistic Reasoning {T}ask.}} (1) \textit{{NLVR2 (Natural Language for Visual Reasoning 2).}} {Given a pair of images and a natural language statement, determine whether the statement accurately describes the relationship between the two images.} \par
\textbf{{Image Captioning {T}ask.}} (1) \textit{{COCO Captioning.}} {Generate descriptive captions for images in the COCO dataset, evaluating the model's ability to understand and describe visual content accurately.}
\begin{table*}[h]
  \centering
  \caption{\label{tab:dataset_stats} Summary of the benchmark datasets. }
  \vspace{-8pt}
  \resizebox{0.8\textwidth}{!}{
  \begin{tabular}{c|ccccc}
  \toprule
  Datasets  &  \# train    &  \# dev   &   \# test   &     {T}ype   &   Metrics  \\ 
  \midrule
  \multicolumn{6}{c}{\textbf{\emph{Common-Sense reasoning tasks}}}   \\
  \midrule
  BoolQ  &     9427	  &  -  &   3270   &  Common-Sense reasoning  &  Acc    \\
  ARC-e   &    2251	 &   570   &   2376    &  Common-Sense reasoning   &  Acc      \\
  ARC-c   &    1119	 &  299	  &   1172    &  Common-Sense reasoning   &  Acc      \\
  COPA   &    400	 &   100	 &   500    &  Common-Sense reasoning   &  Acc      \\
  \midrule
  \multicolumn{6}{c}{\textbf{\emph{Mathematical reasoning tasks}}}  \\
  \midrule
  AQuA    &    97467	 &   254	 &   254   &   Mathematical reasoning &       Acc          \\
  GSM8K  &   7473   &   -	   &  1319   &   Mathematical reasoning  &       Acc    \\
  \midrule
  \multicolumn{6}{c}{\textbf{\emph{GLUE benchmark tasks}}}  \\
  \midrule
  SS{T}-2   &  67k & 872 & 1.8k & Sentiment & Acc \\
  MNLI   &   393k & 20k & 20k & NLU & Acc \\
  QQP    &   364k & 40k & 391k & Paraphrase & Acc-F1 \\
  MRPC   &   3.7k & 408 & 107k & Paraphrase & Acc-F1 \\
  R{T}E    &   2.5k & 176 & 3k & NLU & Acc \\
  QNLI   &   108k & 5.7k & 5.7k & QA/NLI & Acc \\
  CoLA   &   8.5k & 1k & 1k & Acceptability & Mcc \\
  S{T}S-B  &   7k & 1.5k & 1.4k & Similarity & Corr \\
  \bottomrule
  \end{tabular}}
  \end{table*}

\subsection{Dataset Statistics}\label{sec:appendix_dataset}
In our experiments, we compare performance across multiple tasks, including the GLUE benchmark, which consists of eight datasets: CoLA, SS{T}-2, MRPC, QQP, S{T}S-B, MNLI, QNLI, and R{T}E; three common-sense reasoning tasks (BoolQ, ARC-e, and ARC-c); and two mathematical reasoning tasks (AQuA and GSM8K). {T}he dataset statistics are presented in {T}able~\ref{tab:dataset_stats}.

\subsection{Evaluation Metrics}\label{sec:metrics_detail}
As shown in {T}able~\ref{tab:dataset_stats}, we strictly follow the official settings of GLUE and use the same metrics as~\cite{wang2018}. For MNLI, we report the average of the accuracy scores on the matched and mismatched test sets. For MRPC and QQP, we report Acc-F1, the average accuracy, and F1 scores. For S{T}S-B, we report Corr, which denotes the average of the Pearson and Spearman correlation coefficients. For CoLA, we report Mcc, which is the Matthews correlation. For all other tasks, we report accuracy (Acc). Since the common sense and math reasoning tasks usually come with a definite answer choice, we will directly consider the correctness of the final answers. {T}hus, we report accuracy (denoted as Acc). 

\section{Baseline Details}\label{sec:baseline_detail}
$ \bullet $
\textit{Full fine-tuning} is the most common approach for adaptation. During fine-tuning, the model is initialized with pre-trained weights and biases, and all model parameters undergo gradient updates.\par
$ \bullet $
\textit{LoRA}~\citep{lora2022ICLR} is a representative parameter-efficient fine-tuning (PEF{T}) method. It introduces two low-rank matrices to parameterize the incremental weight updates, and only these lightweight components are updated during fine-tuning. {T}he number of trainable parameters is determined by the rank $r$ and the number of inserted adaptation matrices $n$, allowing for fine-grained control over the adaptation budget.\par
$ \bullet $
\textit{AdaLoRA}~\citep{adalora2023ICLR} extends the conventional LoRA framework by introducing a dynamic rank adaptation mechanism. It parameterizes the low-rank adapters using singular value decomposition (SVD), and evaluates the importance of each parameter based on the magnitude of its corresponding singular value. {T}his importance score then guides a progressive rank pruning process, allowing the model to dynamically reallocate its limited parameter budget to more critical layers or modules.\par
$ \bullet $
\textit{DoRA}~\citep{liu2024dora} enhances the learning capacity and adaptability of pretrained models by decoupling weight matrices into two distinct components: magnitude and direction. {T}he key idea is to keep the magnitude fixed and apply LoRA-style low-rank updates only to the directional component. {T}his separation allows for more expressive and geometry-aware adaptation while preserving the norm of the original weights, which helps stabilize training and maintain alignment with the pretrained model. Since only the direction is modified, DoRA introduces no additional inference overhead, making it efficient and scalable for deployment.\par
$ \bullet $
\textit{AutoLoRA}~\citep{Xu2023AutoLoRaAP} is a meta-learning-based fine-tuning approach designed to automatically determine the optimal rank for each layer in Low-Rank Adaptation (LoRA). It introduces a learnable selection variable for each rank-1 matrix and dynamically adjusts these variables using a meta-learning strategy. By jointly optimizing the rank configuration along with the LoRA parameters, AutoLoRA significantly improves fine-tuning efficiency and overall performance.\par
$ \bullet $
\textit{Adapter}~\citep{houlsby2019parameter} inserts lightweight bottleneck modules between each layer of the pretrained model, updating only these newly introduced modules during fine-tuning while keeping the original model parameters frozen.\par
$ \bullet $
\textit{P-tuning v2}~\citep{Ptuningv2} is an improved prompt tuning method that inserts trainable prompt tokens at the input layer and across multiple model layers. {T}his design increases the trainable parameters from approximately 0.01\% to 0.1\%-3\% of the full model, while maintaining parameter efficiency. P-tuning v2 enhances optimization stability and improves performance across various tasks by integrating task-specific information deeper into the model.\par
$ \bullet $
\textit{(IA)\textsuperscript{3}}~\citep{IA2022nips} introduces learnable scaling vectors at key locations in the {T}ransformer architecture, such as the keys and values in the self-attention mechanism and the intermediate activations in the feed-forward networks. {T}hese vectors are applied via element-wise multiplication to modulate the internal activations, enabling flexible control over the model's output without modifying the original model parameters.\par
$ \bullet $
\textit{SSP}~\citep{ssp2022} leverages structural sparsity to guide the automatic search for parameter insertion locations, activating trainable parameters only in the most important substructures. {T}his enables higher efficiency without sacrificing model performance.\par
$ \bullet $
{\textit{GoRA}~\citep{gora} leverages gradient-driven adaptive low-rank adjustment to dynamically adjust the rank of low-rank adaptation layers during training. By using gradient information, GoRA ensures that the model can allocate computational resources more efficiently, adjusting the rank based on the importance of each layer for different tasks and training stages. This method maintains computational efficiency while improving model performance, adapting the low-rank configuration to meet the specific needs of the training process.}

\section{Additional Related Works}
\subsection{Dynamic Rank Allocation}
Dynamic rank allocation gains increasing attention in deep learning model optimization, with various methods proposed to improve adaptability and efficiency. Several other notable approaches are introduced beyond AdaLoRA~\citep{adalora2023ICLR} and AutoLoRA~\citep{Xu2023AutoLoRaAP}. LoSA~\citep{Huang2025DynamicLS} integrates sparsity and low-rank adaptation, dynamically adjusting both using representation mutual information and reconstruction error. PRILoRA~\citep{Benedek2024PRILoRAPA} employs a heuristic strategy that linearly increases ranks from lower to higher layers, motivated by the observation that higher layers often require greater adaptability in transfer learning. ALoRA~\citep{Liu2024ALoRAAL} further incorporates a novel mechanism, AB-LoRA, which assesses the importance of individual LoRA ranks and incrementally prunes redundant components, reallocating the freed budget to more critical {T}ransformer modules. {T}hese methods provide diverse rank allocation strategies that contribute to more efficient fine-tuning of large models.

\section{{T}he Use of Large Language Models}
During the preparation of this manuscript, large language models (LLMs) were employed in several auxiliary capacities. First, at the writing stage, LLMs were utilized to refine and translate the text, thereby enhancing the overall fluency, readability, and precision of academic expression. Second, in relation to experiments and results presentation, LLMs assisted in generating parts of the code for data visualization and figure plotting, which facilitated a more efficient presentation of research findings. Third, in surveying the research landscape and related work, LLMs provided support for literature searches, helping us to locate and summarize relevant studies in the field systematically. Finally, in the theoretical component of this work, LLMs offered auxiliary support in structuring complex proofs and verifying critical derivation steps, contributing to the clarity and rigor of our theoretical analysis. It should be emphasized that all uses of LLMs were strictly auxiliary in nature; the formulation of research questions, the design of methods, the core theoretical derivations, and the experimental analyses were all carried out independently by the authors.\par
\end{document}